\documentclass[1p]{elsarticle}

\pdfpagewidth=\paperwidth
\pdfpageheight=\paperheight
\usepackage{scalerel}
\usepackage[scr=boondox]{mathalfa}
\usepackage{amsmath,graphicx}
\usepackage{verbatim}
\usepackage{color}
\usepackage{amsfonts,amsbsy,amssymb}
\usepackage{natmove}
\usepackage{enumitem}
\usepackage{natbib}
\setcitestyle{numbers}
\usepackage{upgreek}
\usepackage[table]{xcolor}

\usepackage{algorithm}
\usepackage[noend]{algpseudocode}

\def\bq{{\boldsymbol {\rm q}}}

\def\L{\mathcal L}

\def\d{\!\!{\rm d}}

\def\K{{\mathbb K}}

\def\H{{\mathcal H}}
\def\q{{{\mathscr{q}}}}

\def\bR{\boldsymbol {\rm R}}
\def\tanh{{\rm tanh}}
\def\bsigma{{\boldsymbol \sigma}}
\def\C{{\mathscr C}}
\newcounter{concount}
\newcounter{exampcount}
\newcounter{algcount}

\newtheorem{corollary}{Corollary}

\def\X{{{\rm X}^K}}
\def\XT{{{\rm X}^2}}

\newdefinition{observation}{Observation}
\newtheorem{lemma}{Lemma}
\newtheorem{conjecture}{Conjecture}

\newtheorem{proposition}{Proposition}
\newdefinition{remark}{Remark}
\newdefinition{remarks}{Remarks}
\newdefinition{keyremark}{{\bf Key Remark}}
\newdefinition{definition}{Definition}
\newdefinition{note}{{\bf Note on Terminology}}
\newdefinition{notation}{{\bf On Notation}}
\newdefinition{example}{Example}

\newdefinition{construction}{Construction}
\newproof{proof}{Proof}
\setcounter{concount}{1}
\setcounter{exampcount}{1}
\setcounter{algcount}{1}
\DeclareMathOperator*{\argmax}{argmax}
\DeclareMathOperator*{\argmin}{argmin}



\begin{document}

\author[add1]{Paul N. Patrone}
\ead{paul.patrone@nist.gov}
\author[add1]{Anthony J.\ Kearsley}
\address[add1]{National Institute of Standards and Technology, Applied and Computational Mathematics Division, Gaithersburg, MD 20899, USA}

\renewcommand{\thefootnote}{\fnsymbol{footnote}}

\footnotetext[1]{National Institute of Standards and Technology, Gaithersburg MD 20899, USA}
\footnotetext[2]{Department of Medicine, Division of Infectious Diseases and Immunology, University of Massachusetts Chan Medical School, Worcester,
MA, 01655, USA}

\title{Probabilistic Consistency in Machine Learning and Its Connection to Uncertainty Quantification}

\begin{abstract}
Machine learning (ML) is often viewed as a powerful data analysis tool that is easy to learn because of its black-box nature.  Yet this very nature also makes it difficult to quantify confidence in predictions extracted from ML models, and more fundamentally, to understand when they admit probabilistic interpretations.  The goal of this paper is to unravel these issues and their connections to uncertainty quantification (UQ) by deriving a \textit{level-set theory of classification} that establishes an equivalence between certain types of self-consistent ML models and class-conditional probability distributions.  We begin by studying the properties of binary Bayes classifiers, recognizing that their boundary sets can be reinterpreted as level-sets of density ratios quantifying  the relative probability that a sample point belongs to a given class.  By promoting the prevalence (fraction of elements in a class) to the role of an \textit{affine parameter} that orders these level sets, we show that Bayes classifiers satisfy important monotonicity and class-switching properties that can be used to extract the density ratios without direct access to the boundary sets.  Moreover, this information is sufficient for tasks such as constructing the multiclass Bayes classifier from its pairwise counterparts and estimating inherent uncertainty in the class assignments.  In the multiclass case, we use these results to derive normalization and self-consistency properties of Bayes classifiers, the latter being equivalent to the law of total probability.  We then show how these properties  equip {\it arbitrary} ML models with valid probabilistic interpretations arising from inherent class conditional probability distributions.  Throughout, we demonstrate how this analysis informs the broader task of UQ for ML via an uncertainty propagation framework.                           
\end{abstract}

\maketitle

\section{Introduction}
\label{sec:introduction}

From our perspective, the formulation of a complete uncertainty quantification (UQ) framework for machine learning (ML) remains an unresolved problem.  Several works have addressed empirical aspects of this problem through the lens of Monte Carlo methods and related sampling techniques \cite{UQ_Survey1,UQ_Survey2,UQ_Survey3,UQ_Survey4}.  Moreover, it is well known that classification theory can be grounded in probability \cite{MLBook}.  Indeed, this perspective has led to important observations, such as the recognition that the Bayes classifier, which minimizes the expected classification error, also arises from optimization of surrogate loss functions \cite{Hconsistency1,Hconsistency2,Hconsistency3,Hconsistency4}.  In this context, many authors have addressed elements of uncertainty estimation as a task in \textit{calibration}, i.e.\ ensuring that probabilistic ML outputs are consistent with distributional information about data \cite{CalibrationReview,Calibration1,Calibration2,CalibrationApplied,Calibration2016,Calibration2017,Calibration2018}.  However, these works have not considered more fundamental and perhaps subtle questions: (i) are ML predictions always mathematically consistent with probability theory; (ii) can one meaningfully associate aleatoric uncertainty with arbitrary ML models \cite{SmithUQ}; and thus, (iii) do they even permit meaningful UQ?  Stated differently, when is calibration possible, and what mathematical properties of classifiers should inform calibration?

A key difficulty in answering these questions arises from the fact that it is not always clear how, or even \textit{if}, arbitrary ML models are abstract representations of training data.  For example, a discriminative Bayes classifier by definition expresses information about the underlying conditional probabilities $\Pr[r|C]$ of ``inputs'' $r$ generated by elements from class $C$.   But while such classifiers provide minimum error guarantees in a global sense,\footnote{That is, by minimizing the expected classification error.} they generally do not yield \textit{a priori} pointwise estimates of their confidence.  This begs the question of whether discriminative classifiers contain  all of the information about the distributions $\Pr[r|C]$, or if something has been lost.  Such observations also lead us to speculate that if an arbitrary classifier is to admit meaningful UQ, then it must be ``fully equivalent'' to the underlying conditional distributions $\Pr[r|C]$.

The purpose of our manuscript is to understand in what sense this equivalence between classifiers and distributions holds.  The main idea of our analysis is to first determine which properties of Bayes classifiers arise from their probabilistic structure and then demonstrate how these properties alone enable one to associate an arbitrary classifier $\hat C$ with distributions $\Pr[r|C]$.  To achieve this in practice, we consider the boundary sets of Bayes classifiers, which are reinterpreted as level sets of pairwise density ratios quantifying the likelihood that an input $r$ arises from a given class.  By promoting prevalence (fraction of elements in a class) to the role of an affine parameter that orders these level sets, we arrive at an important montonicity property that holds in general for Bayes classifiers.  We then demonstrate how this property can be used to deduce the density ratios, and hence information about $\Pr[r|C]$, in terms of the affine prevalence value at which the class changes.  In extending this result to the multiclass setting, we also identify normalization and self-consistency criteria, the latter expressing the law of total probability for classifiers.  By recourse to the density ratios, our analysis then: (i) demonstrates the sense in which monotonicity, self-consistency, and normalizability enable one to establish a correspondence between arbitrary $\hat C$ and $\Pr[r|C]$; and (ii) shows how the aforementioned analyses can account for distribution shifts due to changes in the prevalence of a test population.  


Because probability is central to our analysis, a related goal of this manuscript is to unravel the role of uncertainty quantification (UQ) as a tool for understanding ML more generally, not just assessing confidence in its predictions.    In other words, we take the position that UQ is inherent to ML, and to analyze ML is to study the theory of UQ for classification.  To realize this in practice, we adopt a small but important change in perspective.  Canonical approaches to ML often directly consider the conditional probability $\Pr[C|r]$ of class $C$ conditioned on the input $r$ \cite{CrossEntropy1,UQ_Survey1,UQ_Survey2,UQ_Survey3,UQ_Survey4,SandiaUQML,Langford05}.  But it is well known that
\begin{align}
\Pr[C|r] = \frac{\Pr[r|C]\Pr[C]}{\Pr[r]} = \frac{\Pr[r|C]\Pr[C]}{\sum_{C'} \Pr[r|C']\Pr[C']}, \label{eq:condprob}
\end{align}
where $\Pr[C]$ is the prevalence of class $C$.  Equation \eqref{eq:condprob} is sometimes used to motivate the definition of a generative classifier as one that directly models $\Pr[r|C]$ \cite{Disc3}.  But Eq.\ \eqref{eq:condprob} demonstrates that this distinction is superficial, as it simply amounts to whether $\Pr[r|C]$ is implicit or explicit in the modeling.  Moreover, it is well known that one can objectively estimate $\Pr[C]$ without recourse to classification or Bayesian priors \textit{per se} \cite{QuantificationBook,Quantification71,Quantification2005,Quantification2009,Quantification2016,Quantification2017,Quantification2018,Quantification2022,Quantification2024}.   In this context, we argue that the study of ML  is more natural when reformulated in terms of $\Pr[r|C]$, since it isolates and makes explicit the way in which classifiers represent data.  This shift in perspective is central to our analysis, and it immediately points to three thematic elements of our work.
  
First and most simply, Eq.\ \eqref{eq:condprob} allows one to better identify and study individual sources of uncertainty.  Note, for example, that even when $\Pr[r|C]$ and $\Pr[C]$ are known exactly, it is possible that $0 < \Pr[C|r] < 1$, in which case $r$ cannot unambiguously be assigned a single class.  We refer to this effect as ``inherent'' uncertainty because it is a fundamental property of the underlying input space \cite{UQ_Survey2}.  In contrast,  the act of choosing a hypothesis set $\H$ of classifiers is equivalent to selecting a family of distributions $\Pr[r|C]$ for regresssion.  This choice may not be well adapted to the data and thereby introduce additional uncertainty into $\Pr[r|C]$, in this case associated with modeling \textit{per se}.  In classification theory, these issues are sometimes acknowledged in the context of $H$-consistency bounds \cite{Hconsistency1,Hconsistency2,Hconsistency3,CrossEntropy1}, but they  have not been fully separated from estimates of $\Pr[C]$, which is yet  a third and often significant source of uncertainty.  Thus, Eq.\ \eqref{eq:condprob} allows us to disentangle such effects via a formal uncertainty propagation framework, which quantifies their individual and compounding contributions to the total uncertainty budget.  

Second, Eq.\ \eqref{eq:condprob} highlights a key distinction between $\Pr[C]$ and $\Pr[r|C]$: the former is a property of a population, whereas the latter is a property of data and the input space $\Gamma$.  As such, the prevalence can (and often does) vary independently of conditional distributions.  This motivates our second interpretation of $\Pr[C]$ as a free-parameter, i.e.\ an \textit{affine prevalence} that ``deforms'' one classifier into another.  In this way, we promote the prevalence from the role of a Bayesian prior -- in essence, a source of uncertainty -- to that of a control variable that parameterizes the loss function used in training.  This also clarifies our claim that analysis of ML is tantamount to studying UQ of classification: it is useful to distinguish uncertainty  inherent to the data from that associated with a population.  As a result, this distinction allows us to argue that aleatoric uncertainty is entirely controlled by $\Pr[r|C]$, thereby justifying the need to understand \textit{when} such distributions are induced by a classifier.

Third, Eq.\ \eqref{eq:condprob} highlights a tension between local and global structure of ML algorithms.  In general, classifiers are trained by optimizing  functions such as the {\it average} classification accuracy \cite{MLBook}.  But Eq.\ \eqref{eq:condprob} makes it clear that we are interested in \textit{pointwise} predictions, since one is always given specific instances of $r$ to analyze.  Thus, our main task can be understood as deducing the pointwise structure of $\Pr[r|C]$ from the global properties of its representation as a classifier.  The affine prevalence plays a fundamental role in making this connection, since we vary it continuously to map out level sets of these conditional distributions.  But we emphasize that this is also a strained process: we encounter issues associated with sets of zero-measure and indeterminate forms, which require auxiliary assumptions to make the global-local connection rigorous.  {\it The reader should remain aware that in practical settings, this tension is a fundamental challenge in UQ for ML; see Sec.\ \ref{sec:examps} in particular.}

It is also important to note that a perspective based on Eq.\ \eqref{eq:condprob} benefits from a subtle reformulation of many well-known concepts in classification theory.  For example, we leverage Bayes-optimal classifiers in detail, but the standard way of constructing them obscures the monotone-class structure parameterized by the affine prevalence.  To make this property apparent, it is useful to recast Bayes-optimal classifiers in set-theoretic terms. This reformulation has the added benefit of clarifying certain challenges posed by sets of measure zero.  

An important limitation of this work is the fact that we primarily consider what amounts to an \textit{infinite-sample} limit; see Ref.\ \cite{Zhang04} for related ideas.  This corresponds to a setting in which the probability distributions $\Pr[r|C]$ can be determined exactly and for which inherent uncertainty is the only source of ambiguity in the true class labels.  While this situation never holds in practice, it does represent a best-case scenario for classification uncertainty, and all other sources build upon this.  Moreover, we feel that the complexity of this subject warrants a separate analysis of each component of the UQ, for which the inherent uncertainty is perhaps the most fundamental part.  A full study of UQ for ML is beyond the scope of this (or even likely one single) manuscript.  Several works in progress address additional elements of the UQ for ML from the perspective of Eq.\ \eqref{eq:condprob}.

Finally, a historical note is in order.  A variation on the monotonicity property was previously noticed as far back as 2005 by Langford and Zadrozny \cite{Langford05}, primarily in the context of binary classifiers. To a lesser extent Ref.\ \cite{Halck} examined similar ideas, and Ref. \cite{MultiGen} considered some extensions to the multiclass setting.  However, our analysis more thoroughly examines the properties of the Bayes classifiers and thereby discovers that the concept of monotone classifier requires significant refinement and generalization.  This is necessary to deduce the self-consistency and normalizability criteria, which appear to have been acknowledged but not fully understood in previous works; see, e.g.\ the Discussion section in Ref.\ \cite{Halck}.  In a related vein, extensions to the multiclass setting require extreme care due to issues associated with non-overlapping supports of the conditional distributions.  In fact, this leads to a fundamental issue of how to interpret classifiers that are evaluated on points outside of their domain of action, and we show that this is analogous to trying to evaluate indeterminate forms such as $0/0$.  Thus, our analysis is partly meant to address unresolved mathematical questions in previous work.     

The rest of this manuscript is organized as follows.  Section \ref{sec:global} gives a broad overview of our perspective on the relationship between classification and probability (\ref{subsec:overview}), which allows us to ground the present manuscript in the broader UQ literature and identify sources of uncertainty in ML models (\ref{subsec:imp4UQ}).  Section \ref{sec:context} provides  background theory for the binary setting.  Section \ref{sec:level-set} presents our main results, which fully generalize  the level-set and class-switching theorems to an arbitrary multi-class settings.  Section \ref{sec:examps} provides examples of our main results and uses these to highlight practical issues of determining when a classifier has a probabalistic interpretation.  Section \ref{sec:discussion} considers our results more broadly in the context of ML theory and discusses limitations and open directions.  

\subsection{On the Conventions Used Herein}

A key conceptual challenge of our analysis is that quantities such as the prevalence $\Pr[C]$ frequently change roles.  For example, $\Pr[C]$ is sometimes an independent variable and at other times a fixed parameter.  To fully account for these changes, our exposition would require an overwhelming amount of notation, which we wish to avoid.  In an attempt to balance simplicity with precision, we therefore adopt the following conventions.  

\begin{itemize}
\item We do not distinguish a function from its action on its argument.  For example, if $\Gamma$ and $\K$ are sets, $r\in \Gamma$, and $\hat C:\Gamma \to \K$, then we refer to both $\hat C$ and $\hat C(r)$ as a function.  
\item A semicolon in the argument of a function is used to distinguish between variables and fixed parameters.  For example, $Q(r;q)$ should be understood as a function of $r$ with $q$ fixed.   More exotic arguments such as sets and partitions also sometimes appear behind semicolons and should likewise be understood as fixed.  
\item We sometimes encounter situations in which the roles of two arguments switch back and forth, e.g.\ one is fixed and the other is variable.  In such cases, we simply use a comma to separate arguments and indicate in writing or through context which argument is fixed.  Thus, the notation $\hat C(r,q)$ indicates a function of $r$ and $q$, but for which either $r$ or $q$ (but not usually both) may be fixed.
\item We often encounter objects such as sets $D$ that are parameterized by scalars or vectors $q$.  For convenience, we denote this dependence using the aforementioned ``function'' notation; e.g.\ $D(q)$ is a set that is parameterized in some way by $q$. 
\item When referring to the general concept of prevalence (i.e.\ divorced from any particular setting), we typically use the notation $\Pr[C]$.  When referring the prevalence of a population, we use the symbol $\chi$.  When referring to the prevalence as a parameter, we use the symbol $q$.  
\item The symbol $\mathbb I$ denotes the indicator function. 
\item We use the extended number system in which $\infty$ is a valid number (not just a limit) satisfying the properties that: $a\cdot \infty = \infty$ for $a$ positive and bounded; $a/0 = \infty$ for $a$ positive and bounded; the product $\infty \cdot 0$ (when interpreted as $\infty \cdot 0^+$) is a non-negative indeterminate number; \textcolor{black}{and the ratio $0/0$ (when interpreted as $0^+/0^+$) is a non-negative, indeterminate number}.  We also use the notation $[0,\infty]$ to denote the interval containing its endpoints, including $\infty$.  See also the preface of Ref.\ \cite{Tao}, for example.  
\end{itemize}

\section{Global Perspective on UQ for ML}
\label{sec:global}

\subsection{Overview of our perspective: what is a classifier?}
\label{subsec:overview}

The overarching goal of this manuscript is to establish a fundamental connection between classification, probability, and prevalence.  However, canonical definitions of classifiers do not provide sufficient mathematical structure to make this connection rigorous, so that it is necessary to first revisit the simpler question, ``what is a classifier?''


To fully answer this question, it is necessary to examine the objects of classification {\it per se}.  Consider, then, a setting with a \textit{test population} or sample space $\Omega$ whose members $\omega \in \Omega$ belong to one and only one of $K$ classes.  The true class $C(\omega)$ of an individual $\omega$ is a discrete random variable, and without loss of generality, we may assume $C(\omega) \in \{1,2,...,K\}$.  We refer to elements $\omega$ as \textit{test points} or \textit{test samples}.  

In practice, the true class of a test point is unknown, and instead, this quantity is to be estimated via classification.  To facilitate this, we are often  given a different random variable $r(\omega)\in \Gamma$ for some set $\Gamma$.  Regarding $\Gamma$, we require only a few assumptions, mainly that there exist density functions that quantify the probability of a measurement outcome $r(\omega)$ conditioned on the class $C(\omega)$.  As $\Omega$ is the test population, $\Gamma$ is therefore the set of possible ``measurement'' outcomes for a point $\omega \in \Omega$.

It is important to note that the structure of $\Gamma$ and its underlying distributions do not fully specify the relevant properties of $\Omega$.  A test population always has the additional and independent property that it can be parameterized by a prevalence vector $\chi=(\chi_1,...,\chi_K)$, where ${\chi_k = \Pr[C(\omega)=k]}$.  For later clarity, we refer to such a $\chi$ as a \textbf{test prevalence} and express its relationship to the test population via the notation $\Omega(\chi)$.  Clearly $\chi$ is an element of the $K$-dimensional probability simplex
\begin{align}
\X =\{q: q\in \mathbb R^K, \sum_{k=1}^K q_k=1, q_k \ge 0 \}. \label{eq:prob_simplex}  
\end{align}
We emphasize that this interpretation of prevalence should always be treated as a fixed property (i.e.\ parameter) of $\Omega$.  Oftentimes the test prevalence is unknown \textit{a priori}.


In this context, an arbitrary classifier is often defined as a mapping 
\begin{align}
\hat C:\Gamma \to \K,  \label{eq:oldclassifier}
\end{align}
where $\K = \{1,2,...,K\}$ is a discrete set \cite{MLBookCh1}.
Traditionally, one treats $\hat C(r(\omega))$  as a proxy for the true class $C(\omega)$ because the former can typically be computed, whereas the latter cannot.  In contrast, we eschew this perspective because the act of classifying data is subjective, whereas our interest is in understanding if and how a classifier contains \textit{objective} information about data, i.e.\ its underlying probability distributions.

To elaborate on this point, consider that Eq.\ \eqref{eq:condprob} quantifies all of the information available for making a decision, given a fixed space $\Gamma$ and the probabilities $\Pr[r|C]$ and $\Pr[C]$.  How one chooses to use that information depends on the application at hand; e.g.\ a Bayes optimal classifier may not be suitable for settings that heavily penalize one type of misclassification over another.  But in light of Eq.\ \eqref{eq:condprob}, one could argue that $\Pr[r|C]$ and $\Pr[C]$ suffice for constructing ``reasonable'' classifiers and estimating their induced uncertainties.  In this context, our analysis ultimately seeks to answer the question: given a classifier that is sufficiently ``adapted'' to the data, can we deduce $\Pr[r|C]$?  Were this possible, it would show that the act of training a classifier is equivalent to statistical regression, i.e.\ a modeling exercise.  But more importantly, it would also allow one to modify a classifier to account for distribution shifts (e.g.\ in the prevalence $\Pr[C]$) and alternative notions of loss.  Thus, the main task of our manuscript can be restated as determining the sense in which the function $\hat C$ represents the probability $\Pr[r|C]$




To understand why this task is difficult, it is instructive to examine the ways in which classifiers in the spirit of Eq.\ \eqref{eq:oldclassifier} fail to represent probabilities.  Under the best of circumstances, we would want to construct $\hat C$  such that 
\begin{align}
\hat C(r(\omega)) = C(\omega) \label{eq:idealized}
\end{align}
for every $\omega$.  Equation \eqref{eq:idealized} corresponds to a ``gold-standard'' classifier for which there is no uncertainty in the class assignments.   This is not necessarily unattainable.  It is straightforward to show that when the supports of $\Pr[r|C]$ are disjoint, there exists a $\hat C$ satisfying Eq. \eqref{eq:idealized}; see, e.g., Defs.\ \ref{def:inherent} and \ref{def:partition_induced}, as well as the concept of linearly separable populations in the context of support-vector machines \cite{SVM1,SVM2,SVM3}.  But for any value of $r$ at which the $\Pr[r|C]$ overlap, it is possible that  $C(\omega)$ is random but $\hat C(r(\omega))$ is deterministic.  Thus, the breakdown in Eq.\ \eqref{eq:idealized} arises from the fact that many classifiers are \textit{functions}, i.e.\ mappings from $r$ to one and only one value of $k\in \K$.  

We can nominally overcome this problem by equipping $\hat C$ with additional structure.  To achieve this, we could posit a second sample space $\Sigma$ with points $\varsigma \in \Sigma$ and  classify $\omega'\in\Omega' = \Omega \times \Sigma$.  In this case, it would seem reasonable to define $\hat C: \Gamma \times \Sigma \to \{1,2,...,K\}$ and require that
\begin{align}
\Pr[\hat C(r(\omega),\varsigma)] = \Pr[C(\omega)|r(\omega)]. \label{eq:criterion}
\end{align}
where $\varsigma$ ``generates the randomness'' in $\hat C$ needed to bring it into agreement with $C(\omega)$; see, e.g.\ approaches based on Bayesian neural networks \cite{BNN1,BNN2}.  Alternatively we could require the classifier to be a mapping from $\Gamma$ onto  $\X$, so that $\hat C$ is explicitly a probability model.  This is the motivation behind cross-entropy minimization \cite{DNN}, for example.  But while such approaches have been successful in certain applications, they neither account for situations in which $\Pr[C]$ changes, nor do they directly model $\Pr[r|C]$.  Thus, we claim that Eq.\ \eqref{eq:criterion} does not fully unravel the connection between classifiers and probability.

The solution to these problems, which we first studied in the context of a binary setting \cite{PartI}, is to consider a \textit{family} $F = \{\hat C(r;q)\}$ of deterministic classifiers indexed by a vector of affine parameters $q \in \X$, which in fact play the role of a prevalence;\footnote{Because one can always convert a stochastic classifier into a deterministic one via the mapping $\hat C(r) \to \argmax_{C} \Pr[\hat C(r) = C]$,  it is sufficient to consider deterministic $\hat C$.  See Sec.\ \ref{sec:discussion}.} see Refs.\ \cite{Langford05,Halck} for similar ideas.  In other words, we seek to promote the test prevalence (which is a fixed property) to the status of an independent variable and define 
\begin{align}
\hat C:\Gamma \times \X \to \K.  \label{eq:promote}
\end{align}
To justify this, note that one always classifies samples {\it from a test population} $\Omega(\chi)$.   Thus, it stands to reason that the set $\X$, which is traditionally omitted from the definition of a classifier, should be included because $\chi$ directly impacts test points $\omega$.  As Fig.\ \ref{fig:binex} illustrates, this allows one to adapt a classifier to a population $\Omega(\chi)$ by setting ${\hat C = \hat C(r,\chi)}$.\footnote{As a historical note, Refs.\ \cite{Langford05,Halck} did not take the formal step of defining $\hat C$ as a function that acts on $\Gamma \times \X$.  While this generalization may seem trivial, it is critical for our analysis.  By its very nature, a classifier defined via Eq.\ \eqref{eq:oldclassifier} \textit{cannot} be fully reconciled with probability theory, whereas Eq.\ \eqref{eq:promote} can.}

To extract information about the $\Pr[r|C]$, however, we require an additional property.
\begin{definition}\label{def:monotoneclassifier}
Let $\X$ be the probability simplex, and let $\hat C: \Gamma \times \X \to \K$.  For $k < k'$ and $\alpha \in \XT$, let  $q(\alpha;k,k'):\XT \to \X$ be the vector whose elements are $q(\alpha;k,k')_k=\alpha_1$, $q(\alpha;k,k')_{k'}=\alpha_2=1-\alpha_1$, and $q(\alpha;k,k')_j=0$ for $j\ne k, k'$.  Then we say that  $\hat C$ is a \textbf{monotone classifier} if for every pair $k,k'$, $k< k'$:
\begin{itemize}
\item there exists a non-empty subset $\Gamma_{k,k'}\subset \Gamma$ such that  $\hat C(r,q(\alpha;k,k'))$ is  a monotone decreasing function of  $\alpha_1$ for every $r\in \Gamma_{k,k'}$;
\item  the $\Gamma_{k,k'}$ cover $\Gamma$, i.e.\ $\displaystyle \bigcup_{k,k'}\Gamma_{k,k'} = \Gamma$.
\end{itemize}
We refer to the argument $q$ of such a monotone classifier $\hat C(r,q)$ as an \textbf{affine prevalence}. 
\end{definition} 

\begin{remark}
We refer to both Eq.\ \eqref{eq:oldclassifier} and \eqref{eq:promote} as classifiers.  Throughout, we distinguish between these two definitions by explicitly specifying their domains of action.  
\end{remark}

While definition \ref{def:monotoneclassifier} seems abstract, its interpretation is straightforward in the context of a binary classifier.  In this case, $\Gamma_{k,k'}=\Gamma$, and $\hat C$ reduces to a function of the form $\hat C(r,(\alpha_1,1-\alpha_1))$ for $\alpha_1 \in [0,1]$.  Figure  \ref{fig:binex} shows a simple monotone classifier sampled on a discrete grid of $\alpha_1$ values.  In fact, these are Bayes-optimal classifiers associated with populations $\Omega(\chi)$ having test prevalence values $\chi = q =(\alpha_1,1-\alpha_1)$ [see Sec.\ \ref{sec:examps} for details]. This example illustrates an intuitive and powerful property: monotone classifiers have a set inclusion (or monotone class \cite{LiebLoss}) structure parameterized by $\alpha$, and as a result, the assigned class of each point changes at one and only one value of $\alpha$.  Key tasks in the next sections are to (i) demonstrate that  Bayes classifiers satisfy Def.\ \ref{def:monotoneclassifier}, (ii) prove that the value of $\alpha$ at which the class changes determines $\Pr[r|C]$; and (iii) show that the sets $\Gamma_{k,k'}$ are tied to the supports of $\Pr[r|C]$.  But to arrive at this point, we must formalize the monotone structure in terms of set theory; see Sec.\ \ref{sec:level-set}.  For now it is sufficient to note that in the context of Fig.\ \ref{fig:binex}, we may view  $q$, as a parameter that ``deforms one classifier into another.''  It is for this reason that we refer to this variable as an affine prevalence, i.e.\ to highlight its connection to geometry.\footnote{We believe that the connection between our work and differential geometry  runs much deeper than initially meets the eye.  The terminology ``affine prevalence'' is also meant to suggest the idea that the density ratio level-sets discussed in the next sections may have a viable interpretation as tangent spaces  connected by $q$.  See Sec.\ \ref{sec:discussion}, as well as the curves on Fig.\ \ref{fig:binex}.}   {\it Note  that $q\in \X$ should be treated as a variable, not a fixed parameter.}

\begin{figure}
\begin{center}
\includegraphics[width=8cm]{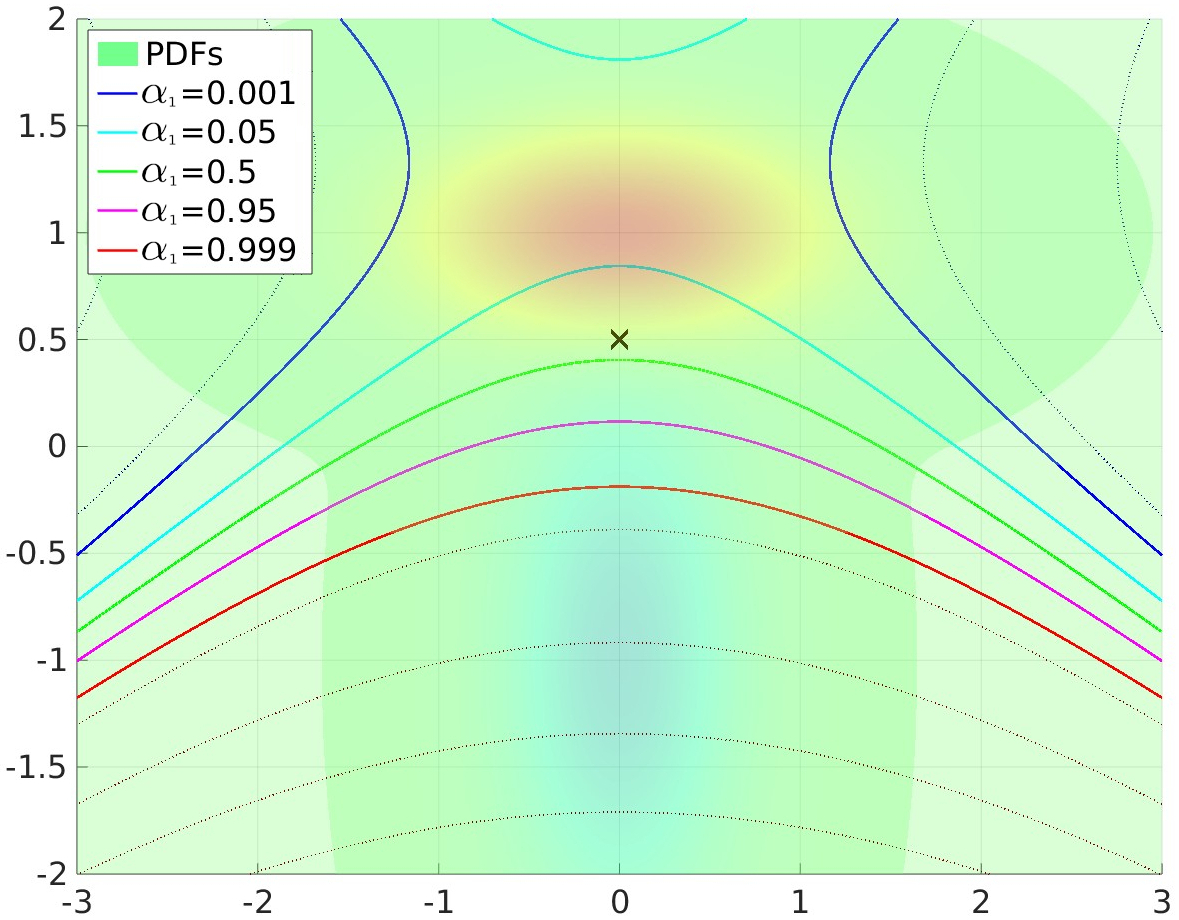}
\end{center}
\caption{Example of a binary monotone classifier.  The color scale denotes the probability densities of two Gaussian distributions quantifying the probabilities $\Pr[r|C]$ for two classes.  The top (red) density is associated with class $1$, and the bottom (blue) density is associated with class $2$.  The hyperbolas are classification boundaries for different values of the affine prevalence $q=(\alpha_1,1-\alpha_1)$.    Each boundary corresponds to a different classifier.  For the plotted values of $\alpha_1 \ge 0.05$, the points on the same side as the foci are assigned to class $2$, and those on the other side are assigned to class $1$.  For $\alpha_1 = 10^{-3}$, the assignment is reversed.  These boundaries are also Bayes-optimal for test populations $\Omega(\chi)$ having test prevalence values $\chi=q$.  [Shading does not correspond to any particular test prevalence.]  Note that as a function of $\alpha_1$, the point $x$ changes class one and only one time.  In fact, $\hat C(r,q)$ is a monotone decreasing function of $\alpha_1$.  The classification boundaries also suggest an important set inclusion structure associated with the classification domains.  See the top-right panel of Fig.\ \ref{fig:ternary} (which shows the same distributions) and Eqs.\ \eqref{eq:gausspdfs}--\eqref{eq:means2} for more details.}\label{fig:binex}\end{figure}

To summarize our perspective then, we answer the question, ``what is a classifier,'' as follows.  To be consistent with probability, an arbitrary classifier must be defined by Eq.\ \eqref{eq:promote}, and when equipped with the monotonicity property, can be made equivalent to a set of (possibly implicit) parameterized probability models for $\Pr[r|C]$.  It follows that training such a classifier is equivalent to statistical regression on those models.  We emphasize, however, that monotonicity is a necessary, but not always sufficient condition for this correspondence to hold.  In the following sections, we derive additional requirements, the classifier versions of the law of total probability and normalization.


\subsection{Implications for UQ}
\label{subsec:imp4UQ}

Momentarily assuming that the claims of Sec.\ \ref{subsec:overview} are true, we pause to consider how our perspective informs the broader task of UQ for classification and ML.  The reasons for doing so are twofold.  First, we primarily consider classification in the idealized limit of infinite data and perfect optimization.  Framing this analysis in the context of UQ helps to distinguish sources of uncertainty and thereby clarify the assumptions used herein.  Second, this idealized limit yields the smallest-possible uncertainty in the class labels, upon which all other effects build.  Thus, our discussion highlights the fact that the relationship between classifiers and probability distributions is fundamental to UQ of ML.   We refer the reader to Refs.\ \cite{UQ_Survey1,SandiaUQML,UQ_Survey2,UQ_Survey3} and the references therein for other perspectives on this topic.

If we therefore consider a best-case scenario, e.g. $\hat C$ is chosen to be Bayes-optimal, it is still possible that $0<\Pr[C|r]<1$, meaning that $r$ simply does not provide unambiguous information about how to classify data.   This motivates the following definition.
\begin{definition}\label{def:inherent}
Let 
\begin{align}
u(r) &= 1- \max_C \left\{\Pr[C|r]\right \} = 1 - \max_C \left\{\frac{\Pr[r|C]\Pr[C]}{\Pr[r]}\right \}. \label{eq:pointwiseinherent}
\end{align}
We refer to $u(r)$ as the \textbf{pointwise irreducible (or inherent) uncertainty.}
\end{definition}
In UQ for ML literature, irreducible uncertainty is sometimes referred to as entropy \cite{UQ_Survey1} or variability in real-world effects  and measurement noise \cite{UQ_Survey2}.  Irrespective of the name, the key observation is that given the choice of input space $\Gamma$, our confidence in assigning a class to  a point $r$ is bounded by $u(r)$.  Thus, Eq.\ \eqref{eq:pointwiseinherent} is also the pointwise version of the Bayes error \cite{Inherent}. 

From the perspective of uncertainty \textit{propagation}, it is notable that $u(r)$ depends on a product of the prevalence $\Pr[C]$ and conditional probability $\Pr[r|C]$.  This has several implications.  For example, $\Pr[C]$ and $\Pr[r|C]$ are separable, and hence these terms and their uncertainties can be studied in isolation.  Moreover, we generally expect uncertainties in $\Pr[C]$ and $\Pr[r|C]$ to be independent, since one is a property of a population and the other of data.  Thus, an empirical realization of $u(r)$ can be expressed as
\begin{align}
\tilde u(r) = 1 - \max_C \left\{\frac{\left[\Pr[C] + \epsilon_\chi(C) \right]\left[ \Pr[r|C] + \epsilon_{\rm dist}(C) \right]}{\sum_{C'}\left[\Pr[C'] + \epsilon_\chi(C) \right]\left[ \Pr[r|C'] + \epsilon_{\rm dist}(C) \right] } \right\} + \epsilon_{\rm num}, \label{eq:emp_inherent}
\end{align}
where $\epsilon_\chi(C)$ quantifies uncertainty in the prevalence estimate as a function of the class $C$, $\epsilon_{\rm dist}(C)$ quantifies uncertainty associated with the conditional probability distributions, and $\epsilon_{\rm num}$ quantifies other sources of uncertainty that may be associated with numerical realizations of $u(r)$.  

To understand how sources of uncertainty contribute to $\epsilon_\chi$ and $\epsilon_{\rm dist}$, consider the typical steps in a classification process.  Often one begins by collecting training and test data.  Several authors have demonstrated that the test prevalence $\chi=\Pr[C]$ can be objectively estimated without recourse to classification, which is the so-called \textit{quantitation or quantification problem} \cite{QuantificationBook,Quantification71,Quantification2005,Quantification2009,Quantification2016,Quantification2017,Quantification2018,Quantification2022,Quantification2024}.  Moreover, it is well known that the amount of training and test data controls the uncertainty $\epsilon_\chi$ in such estimates, so that we refer to this effect as  \textbf{sampling uncertainty}.  The next step is often to choose a \textbf{hypothesis set} $\H$ of functions that in principle include the classifier of interest.  By this, we mean the hypothesis set is assumed to contain a $\hat C^\star$ that minimizes a chosen loss function $\L$.  In this work, we consider the $0$--$1$ loss, so that $\hat C^\star$ is the Bayes optimal classifier.  However, given finite training data, one must approximate $\L$ in terms of an empirical loss function $\hat \L$ whose minimizer may differ from $\hat C^\star$, even when $\hat C^\star \in \H$.  This is another manifestation of sampling uncertainty, and this is equivalent to errors induced in fitting distributions for $\Pr[r|C]$ to finite data.  Such effects contribute to $\epsilon_{\rm dist}$.   

In addition to sampling uncertainty, the process of constructing a classifier typically involves optimization of $\hat \L$ over the set $\H$.  It is well-known that empirical loss functions have complicated and non-convex structures.   Thus, a common problem in more advanced classification algorithms (e.g.\ neural networks) is to ensure convergence of the training process.  We refer to this effect as \textbf{optimization error},\footnote{We use the term ``error'' when we anticipate there is an underlying deterministic truth, whereas ``uncertainty'' is used more broadly to encompass phenomena for which stochasticity plays some role in our lack of knowledge.  Thus, optimization error is distinct from sampling uncertainty since the former could, in principle, be fixed by ensuring convergence of the training process, whereas the sampling is always subject to luck-of-the-draw.} and this effect contributes to $\epsilon_{\rm dist}$; see Refs.\ \cite{Ward1,Ward2} for analysis of such problems.  If in addition to this, the hypothesis set $\H$ does not contain $\hat C^\star$, then we may incur additional \textbf{model-form error} \cite{SmithUQ,BoisvertUQ}, which is equivalent to fitting the training data to a family of distributions that does not include $\Pr[r|C]$.  This also contributes to $\epsilon_{\rm dist}$.  In the analysis that follows, we also encounter a concept of \textbf{binning uncertainty}, which is tied to fact that, given finite data, we can only quantify inherent uncertainty to within certain intervals controlled by how densely one samples level-sets.  This effect contributes to $\epsilon_{\rm num}$ and is illustrated in Fig.\ \ref{fig:binex}, where we only sample a finite number of boundary classifiers; see also Ref.\ \cite{Langford05}.  

\begin{figure}
\begin{center}\includegraphics[width=10cm]{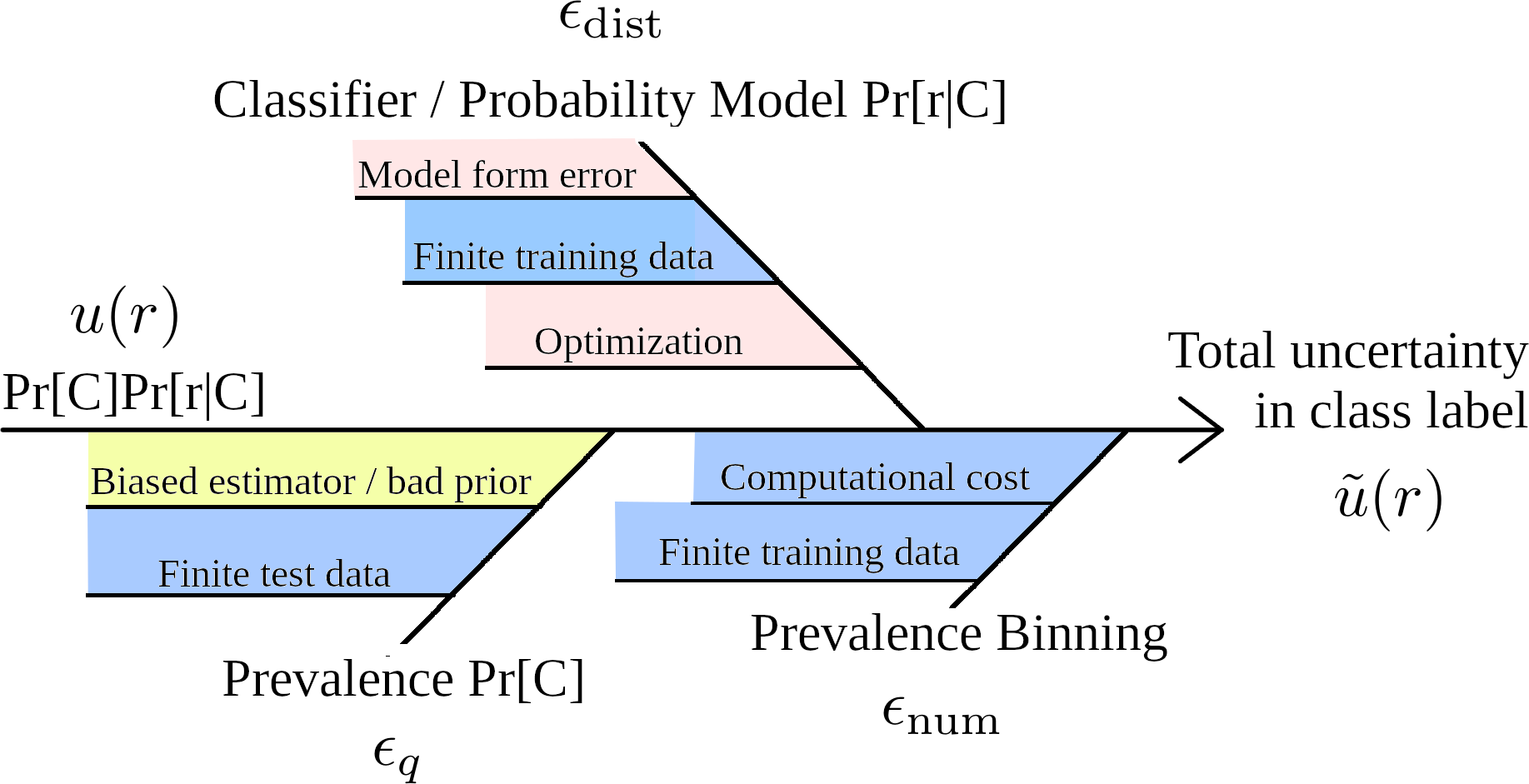}\end{center}\caption{Fishbone diagram characterizing sources of uncertainty and how they propagate into the total class uncertainty estimate.  The total uncertainty $\tilde u(r)$ in the class label is a given by the propagation of all uncertainties through $\Pr[C]\Pr[r|C]$.  Uncertainty in each of these terms is indicated by the spines; see Eq.\ \eqref{eq:emp_inherent} and the surrounding text for definitions of each symbol.  Sampling effects that can be controlled by collecting more data are shaded in blue, whereas effects shaded in red may be difficult to control and depend on in-depth knowledge of the underlying true probability models or convergence of the training routine.  The effect shaded in yellow, i.e.\ biased prevalence estimator or bad prior, may or may not be controllable, depending on the amount of test data available.  However, in the limit of large amounts of test data, it is generally possible to use unbiased estimators for the test-population prevalence.  }\label{fig:fishbone}
\end{figure}

Figure \ref{fig:fishbone} shows how these sources of uncertainty are connected to one another and propagate through a \textit{fishbone diagram}, which yields the total uncertainty in the class labels.  Note in particular that construction of the conditional distributions $\Pr[C|r]$ is equivalent to estimating the inherent uncertainty, assuming nothing else contributes.  This figure, as well as Eq.\ \eqref{eq:emp_inherent}, also clarifies our key perspective.  If we cannot unambiguously associate a classifier $\hat C$ with the conditional distributions $\Pr[r|C]$, it is not possible to even compute $\tilde u(r)$.  Moreover, this association cannot be made in an \textit{ad hoc} or \textit{post-hoc} manner.  The process of training a classifier itself can induce probabilistic information in $\hat C$, and auxiliary modeling choices on top of this can lead to unquantifiable model-form errors or worse, inconsistency with probability.  We also observe that because $\epsilon_\chi$ depends primarily on sampling, it is in generally a source of epistemic (i.e.\ reducible) uncertainty.  Thus, $\Pr[r|C]$ fully determines the aleatoric or irreducible uncertainty, and our main task amounts to determining when this concept can be meaningfully applied to a classifier.  For this reason, the remainder of the manuscript primarily focuses on settings in which $u(r)$ is the only source of uncertainty.

\section{Fundamental Mathematical Concepts}
\label{sec:context}

Following on the perspective outlined above, an analysis of ML can be split along two parallel tracks: independent constructions of $\Pr[C]$ and $\Pr[r|C]$.  The former problem has been rigorously studied in a variety of contexts spanning diagnostics and epidemiology to machine learning, so that we do not consider it here.  We refer readers to works on prevalence estimation \cite{GladenRogan,Patrone22_2,OldPrevOpt} and the quantitation problem \cite{QuantificationBook,Quantification71,Quantification2005,Quantification2009,Quantification2016,Quantification2017,Quantification2018,Quantification2022,Quantification2024}, for example.  Here we focus on establishing an equivalence between classifiers $\hat C$ and distributions $\Pr[r|C]$ in the binary context.  We remind the reader that while many of the definitions in this section have standard counterparts in the literature, we modify them so as to extract additional structure needed for later analysis.  We indicate differences where appropriate.  

Before proceeding, it is also useful to clarify that our main task can be further divided into two questions: (i) given a Bayes-optimal classifier, can we extract information about the true underlying $\Pr[r|C]$; and (ii) given an arbitrary classifier $\hat C$, is there a natural way to associate with it distributions $\Pr[r|C]$?  This section focuses the first question in a binary setting; Section \ref{sec:level-set} addresses both questions in a multiclass setting.

\subsection{Classifiers Revisited: A Set Theoretic Perspective}
\label{subsec:coredefs}

We begin by characterizing the ``true'' properties of the test population, although these are rarely directly observable.  In particular, let $r(\omega)$ and $C(\omega)$ be the measurement and true class of a sample point $\omega$ as discussed previously.  We use the notation $P_j(r)$ to denote the probability density function of the value $r(\omega)$ conditioned on $C(\omega)=j$.  Herein {\bf we always consider {\it {\bf bounded}} densities} for which a given point $r$ has zero measure, so that we may assume absolute continuity of measure \cite{Tao}.  Given $\Omega(\chi)$, the law of total probability yields the probability density of a measurement outcome $r$ for the test population as 
\begin{align}
Q(r;\chi)=\sum_j \chi_jP_j(r).
\end{align}

As suggested by Fig.\ \ref{fig:binex}, a guiding principle of our analysis is the idea that classifiers are isomorphic to partitions of $\Gamma$, since $\hat C(r,q)$ maps $\Gamma \times \X$ to a discrete set.  This motivates to the following definition.
\begin{definition}\label{def:partition_induced}
Let $\hat C:\Gamma  \to \K$ be a classifier.  Let $U=\{D_j\}$ be a collection of sets for $j\in \K$ constructed so that
\begin{align}
D_j = \{r:\hat C(r)=j\}.
\end{align}
Then we say that $U$ is the \textbf{partition induced by the classifier} $\hat C$.  Moreover, let $U=\{D_1,...,D_K\}$ be a partition of $\Gamma$ with $K$ elements.  Then we say that the classifier defined by
\begin{align}
\hat C(r) = j \mathbb I\left[r\in D_j \right]
\end{align}
is \textbf{the classifier induced by the partition} $U$.
\end{definition}

\begin{remark}
Definition \ref{def:partition_induced} leverages the concept of a classifier given by Eq.\ \eqref{eq:oldclassifier}.  This is to emphasize that the relationship between partitions and classifiers does not depend on our reinterpretation of the latter via Eq.\ \eqref{eq:promote}, but is instead a general, albeit simple, isomorphism.   For this reason, we refer to partitions $U$ of $\Gamma$ as classifiers.  Note also that we may trivially extend these definitions to settings in which $\hat C:\Gamma \times \X \to \K$ by simply promoting the partition $U \to U(q)$, i.e.\ to a mapping between $\X$ and the set of partitions on $\Gamma$.  We denote the isomorphism between partitions and classifiers as $U \cong \hat C$.  
\end{remark}

The identification of a classifier with a partition $U$ is useful because sets are natural objects on which to define probability measures.  As such, this isomorphism permits us to carry information about the $P_j(r)$ onto $\hat C$.  To foreshadow such issues, we consider two  implications of the set-theoretic perspective. 

First, it allows us to recast Bayes classifiers as follows \cite{RW,Patrone21_1}.
\begin{definition}\label{def:bayesoptimal}
Let the $P_j(r)$ be the class-conditional probability densities associated with a two-class setting (i.e.\ $j\in \{1,2\}$), and let $q=(q_1,q_2)\in \XT$ be an affine prevalence.  Then the Bayes optimal classifier for a test population $\Omega(\chi)$ having test prevalence $\chi=q$ is given by the partition $U^\star=\{D^\star_1(q),D^\star_2(q)\}$, where
\begin{subequations}
\begin{align}
D_1^\star(q)&=\{r:q_1 P_1(r) > q_2 P_2(r)\} \cup B^\star_1(q), \label{eq:d1} \\ 
D_2^\star(q)&=\{r:q_1 P_1(r) < q_2 P_2(r)\} \cup B^\star_2(q),  \label{eq:d2}
\end{align}
and where $B^\star_1(q)$ and $B^\star_2(q)$ form an arbitrary partition of 
\begin{align}
B^\star(q) = \{r:q_1 P_1(r) = q_2 P_2(r) \}.  \label{eq:bset}
\end{align}
\end{subequations}
\end{definition}

\begin{remark}
Definition \ref{def:bayesoptimal} is pointwise optimal, i.e.\ it minimizes Eq.\ \eqref{eq:pointwiseinherent}.  The analysis herein always assumes this stronger form of the Bayes optimal classifier, which stands in contrast to a weaker version that only minimizes the expected error.  This weaker definition of Bayes optimality can differ from Def.\ \ref{def:bayesoptimal} on sets of measure zero and importantly, it may not minimize Eq.\ \eqref{eq:pointwiseinherent} for all values of $r\in \Gamma$.  That one realizes a strong Bayes classifier in practice is an assumption.  Classifiers are often trained by minimizing an expected error or surrogate loss function \cite{MLBook,Zhang04,Hconsistency1,Metrics1,Metrics2,Metrics3,Metrics4,Metrics5,Metrics6}, which is not guaranteed to yield Eqs.\ \eqref{eq:d1}--\eqref{eq:bset}.  It is also important to note that even Def.\ \ref{def:bayesoptimal} only defines a classifier up to an equivalence class that can differ in the choice of $B^\star(q)$.  \textbf{We alert the reader that we always adopt this more general definition of a monotone classifier $\hat C:\Gamma \times \X \to \K$ as an equivalence class of partitions, where members of the equivalence class can differ in their boundary sets, i.e. at the affine prevalence values where the class changes.}
\end{remark}

Definition \ref{def:partition_induced} also suggests a re-interpretation of the monotonicity property of $\hat C(r,q)$; see also Fig.\ \ref{fig:binex}. [See Chapter 1 of Ref.\ \cite{LiebLoss} for background on monotone classes.]  
\begin{lemma}[Monotone Class Property of Monotone Classifiers]\label{lem:setinclusion}
Assume that the function ${\hat C:\Gamma \times \XT \to \{1,2\}}$ is a binary classifier, and let $\hat C(r,q) \cong U(q)$.  Then $\hat C$ is a monotone classifier of the form $\hat C(r,q)=\hat C(r,(q_1,1-q_1))$ if and only if for each $j\in \{1,2\}$, the $D_j(q)\in U(q)$ for all $q\in[0,1]$ form a monotone class having the property 
\begin{subequations}
\begin{align}
 \qquad & D_1(q) \subset D_1(q'), & \forall \quad q,q',\quad q_1 < q_1', \\
 \qquad & D_2(q) \supset D_2(q'), & \forall \quad q,q',\quad q_1 < q_1'.
\end{align}  
\end{subequations}
\end{lemma}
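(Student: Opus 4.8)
The plan is to prove the biconditional directly, unpacking both the definition of a monotone classifier (Definition~\ref{def:monotoneclassifier}) and the isomorphism $\hat C(r,q)\cong U(q)$ in the binary case. In the binary setting, $\K=\{1,2\}$, the simplex $\XT$ is parameterized by $q_1\in[0,1]$ with $q_2=1-q_1$, and there is only one pair $(k,k')=(1,2)$, so the covering condition forces $\Gamma_{1,2}=\Gamma$. Thus $\hat C$ is monotone iff $\hat C(r,(q_1,1-q_1))$ is a monotone \emph{decreasing} function of $q_1$ for every $r\in\Gamma$. Since the codomain is the two-point set $\{1,2\}$, a decreasing function of $q_1$ means: once it takes the value $1$, it cannot later take the value $2$ as $q_1$ increases; equivalently, for each $r$ there is a threshold so that $\hat C(r,(q_1,1-q_1))=2$ for small $q_1$ and $=1$ for large $q_1$ (with the behavior at the threshold left to the boundary set / equivalence class).

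The key translation step is to rewrite everything in terms of the induced partition. By Definition~\ref{def:partition_induced} (and its extension to $q$-parameterized partitions in the remark following it), $r\in D_1(q)\iff \hat C(r,q)=1$ and $r\in D_2(q)\iff \hat C(r,q)=2$. So the statement ``$\hat C(r,\cdot)$ is decreasing in $q_1$'' is, pointwise in $r$, exactly the statement ``if $q_1<q_1'$ then $\hat C(r,(q_1,\cdot))=1\Rightarrow \hat C(r,(q_1',\cdot))=1$,'' i.e. $r\in D_1(q)\Rightarrow r\in D_1(q')$. Quantifying over all $r\in\Gamma$ gives $D_1(q)\subset D_1(q')$. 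The symmetric statement for class $2$ (the value cannot jump from $2$ back up... rather, cannot appear after it has disappeared) gives $r\in D_2(q')\Rightarrow r\in D_2(q)$, i.e. $D_2(q)\supset D_2(q')$. Conversely, if the nesting relations hold for all $q_1<q_1'$, then for any fixed $r$ the set of $q_1$ with $\hat C(r,(q_1,1-q_1))=1$ is ``upward closed'' and the set with value $2$ is ``downward closed,'' which is precisely monotone decreasing for a $\{1,2\}$-valued function; since this holds for every $r$ and $\Gamma_{1,2}=\Gamma$ trivially covers $\Gamma$, $\hat C$ is a monotone classifier. I would present this as two short implications, being careful that $\{D_1(q),D_2(q)\}$ partitioning $\Gamma$ means $r\notin D_1(q)\iff r\in D_2(q)$, so the two nesting conditions are in fact logically equivalent to each other and to the monotonicity — I would remark on this to streamline the argument.

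The main obstacle, and the place I would spend the most care, is the behavior on the boundary set $B^\star(q)$ and the fact (flagged in the remark after Definition~\ref{def:bayesoptimal}) that a monotone classifier is really an \emph{equivalence class} of partitions differing on their boundary sets. Strictly speaking, for a fixed choice of representative, $\hat C(r,(q_1,1-q_1))$ as a function of $q_1$ need not be monotone in the naive sense at the single threshold value where $r$ lies on the boundary; monotonicity of a $\{1,2\}$-valued function tolerates at most this one ambiguous point, and one must check that the nesting $D_1(q)\subset D_1(q')$ for \emph{strict} inequality $q_1<q_1'$ is exactly what survives. I would therefore phrase the nesting conclusions with strict inequalities $q_1<q_1'$ (as the Lemma statement already does), observe that a monotone decreasing $\{0,1\}$-type step function is determined by its threshold up to the value at that single point, and note that the equivalence-class convention makes the value at the threshold immaterial — so the biconditional holds at the level of equivalence classes, with strict-inequality nesting being the invariant formulation. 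A one-line remark that the two displayed inclusions are equivalent via complementation within the partition closes the argument cleanly.
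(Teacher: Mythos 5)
Your proposal is correct and follows essentially the same route the paper sketches: translate pointwise monotone decrease of the $\{1,2\}$-valued map $q_1\mapsto\hat C(r,(q_1,1-q_1))$ into the nesting of $D_1(q)$ and $D_2(q)$ via the partition isomorphism, use the fact that a point cannot switch back once it changes class, and invoke the equivalence-class convention to dispose of the single ambiguous threshold value. Your observation that the two displayed inclusions are equivalent by complementation within the partition is a clean streamlining consistent with the paper's (unwritten) argument.
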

Proof of Lemma \ref{lem:setinclusion} is tedious but straightforward; it relies on monotonicity and the fact that the $D_j(q)$ are an equivalence class.  The key steps leverage the facts that: (i) once a point $r$ switches class as a function of $\alpha$, it cannot switch back; and (ii) for a fixed $r$, we are free to choose the class label at the affine prevalence at which $\hat C$ is discontinuous.  The following result is a trivial consequence of Lemma \ref{lem:setinclusion}.
\begin{corollary}
A (strong) Bayes optimal classifier is a monotone classifier.  
\end{corollary}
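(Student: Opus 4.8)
The plan is to deduce this directly from Lemma~\ref{lem:setinclusion}. That lemma says a binary classifier $\hat C(r,q)=\hat C(r,(q_1,1-q_1))$ is monotone if and only if the sets of its induced partition form a monotone class that grows ($D_1$) and shrinks ($D_2$) as $q_1$ increases; so it is enough to produce one representative of the (strong) Bayes optimal equivalence class of Definition~\ref{def:bayesoptimal} whose partition sets have this property. I would take the representative that places every boundary point in class~$1$, namely $B_1^\star(q)=B^\star(q)$ and $B_2^\star(q)=\emptyset$, so that $D_1^\star(q)=\{r:q_1P_1(r)\ge q_2P_2(r)\}$ and $D_2^\star(q)=\{r:q_1P_1(r)<q_2P_2(r)\}$ with $q_2=1-q_1$.

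The engine of the argument is the observation that, for each fixed $r$, the map $g_r(q_1):=q_1P_1(r)-(1-q_1)P_2(r)=q_1\big(P_1(r)+P_2(r)\big)-P_2(r)$ is an affine function of $q_1$ on $[0,1]$ with nonnegative slope $P_1(r)+P_2(r)$; here we use that the densities $P_j$ are bounded, hence finite. Therefore $\{q_1\in[0,1]:g_r(q_1)\ge 0\}$ is an ``up-set'': an interval $[t(r),1]$, or empty, or all of $[0,1]$ (the last case being exactly $P_1(r)=P_2(r)=0$). Since $r\in D_1^\star((q_1,1-q_1))$ precisely when $g_r(q_1)\ge 0$, this shows $D_1^\star(q)\subset D_1^\star(q')$ whenever $q_1<q_1'$. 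Because $\{D_1^\star(q),D_2^\star(q)\}$ is a partition of $\Gamma$, complementation gives $D_2^\star(q)\supset D_2^\star(q')$ for $q_1<q_1'$, and Lemma~\ref{lem:setinclusion} then delivers the corollary.

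There is no serious obstacle here; the only care needed is the bookkeeping around $B^\star(q)$ and the extreme weights $q_1\in\{0,1\}$. One should note that the degenerate points where both densities vanish lie in $B^\star(q)$ for every $q$, so with the chosen representative they sit in $D_1^\star(q)$ for all $q$ --- a constant, hence monotone, family --- and that the endpoint cases $q_1=0$ and $q_1'=1$ are just the values $0$ and $1$ of the interval $[0,1]$ in the up-set description, which the affine-function viewpoint covers uniformly. It is worth flagging in the proof that monotonicity is a property of this suitably chosen representative and not of an arbitrary one: a perverse assignment of the degenerate boundary points to different classes at different $q$ would violate it, which is precisely why Definition~\ref{def:monotoneclassifier} and the Key Remark cast a monotone classifier as an equivalence class of partitions.
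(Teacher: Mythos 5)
Your proposal is correct and takes essentially the same route the paper intends: the paper presents the corollary as an immediate consequence of Lemma~\ref{lem:setinclusion}, and you simply make the "trivial" step explicit by verifying the monotone-class hypothesis for a boundary-set representative via the observation that $q_1\mapsto q_1P_1(r)-(1-q_1)P_2(r)$ is affine with nonnegative slope. Your side remarks on the degenerate points with $P_1(r)=P_2(r)=0$ and on monotonicity being a property of a suitably chosen member of the equivalence class are consistent with the paper's Key Remark and with the proof sketch of the lemma.
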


From a practical standpoint, the monotone class property is useful because it allows us to meaningfully consider certain types of limits of classifiers parameterized by $q$.  In Ref.\ \cite{Langford05}, this property appears to have been implied by the analysis, but not made explicit.   In the next section we show that this ability to take limits is critical for extracting information about the $P_j(r)$.  

As an aside, we also speculate that Lemma \ref{lem:setinclusion} may be of inherent interest.  In probability and measure theory, it is well-known that in order to construct a probability space, it is necessary to identify a $\sigma$-algebra, which is subsequently used to construct measurable sets \cite{LiebLoss}.  The monotone class theorem (see e.g.\ Chapter 1 of Ref.\ \cite{LiebLoss}) provides a prescription for constructing a $\sigma$-algebra from a monotone class.  Given that our aim is to establish a more rigorous connection between classifiers and probability theory, it is thus surprising that the Bayes classifier appears to induce such a monotone class.  A more in-depth analysis of this observation is left as an open question.    

\subsection{Extracting Information about $\Pr[r|c]$ in the Binary Case}
\label{subsec:PrC}

Bayes classifiers are determined by the $P_j(r)$, but the inequality structure of Eqs.\ \eqref{eq:d1}--\eqref{eq:bset} means that $U^\star(q)$ cannot be directly used to deduce these conditional densities.  To do so, it is useful to consider the boundary set, which  can be expressed as
\begin{align}
B^\star(q) = \{r:q_1 P_1(r)=q_2 P_2(r)\} = \left \{r: \frac{q_1}{q_2} = \frac{P_2(r)}{P_1(r)} \right \}, \label{eq:bstar2}
\end{align}
where we temporarily ignore issues associated with $P_1(r)=0$.  In Eq.\ \eqref{eq:bstar2}, the equality 
\begin{align}
 \frac{q_1}{q_2} = \frac{q_1}{1-q_1} = \frac{P_2(r)}{P_1(r)} \nonumber 
\end{align} 
is interesting for two reasons.  First, $B^\star(q)$ is a level-set of the ratio $P_2(r)/P_1(r)$, and thus it stands to reason that the latter quantity is special in some way.  Second, this equation suggests promoting the affine-prevalence from the role of a free parameter to that of a function of $r$, which yields the value of $q$ for which $r\in B^\star(q)$.  In other words, it seems reasonable to reinterpret the statement $r\in B^\star(q)$ as a ``set-theoretic equation'' for an unknown $q$, which we could express informally by writing ``$r\in B^\star(q;r)$'' for a fixed value of $r$.  The solution to this ``equation'' is given by some function $\q(r)$, which we define below.  In particular, consider the following.
\begin{definition}\label{def:densityratio}
Let $\Gamma_{j,k}$ be the union of supports of the bounded functions $P_j(r)$ and $P_k(r)$.  The ratio $R_{j,k}: \Gamma_{j,k} \to [0,\infty]$ defined as
\begin{align}
R_{j,k}(r) = \frac{P_k(r)}{P_j(r)} \label{eq:relprob}
\end{align}
is the {\bf density ratio} between classes $j$ and $k$, where we interpret the situation $R_{j,k}(r)= \infty$ to correspond to $P_j(r)=0$.
\end{definition}

\begin{remark}
Two important comments are in order.  
\begin{itemize}
\item The appearance of $\Gamma_{j,k}$ in Def.\ \ref{def:densityratio} is non-trivial because $R_{j,k}$ is indeterminate outside of this set.  In the binary case, we can simply overcome this by setting $\Gamma = \Gamma_{1,2}$.  However, in the multiclass case the situation is more complicated if the supports $\Gamma_{j,k}$ are partially disjoint as $j$ and $k$ vary.  In particular, it is not clear how to define $R_{j,k}$ on $\Gamma_{j',k'}/ \Gamma_{j,k}$ if $\Gamma_{j,k} \ne \Gamma_{j',k'}$, since the density ratio takes the indeterminate form $0/0$ on this set.  Addressing this issue is a key task in Sec.\ \ref{sec:level-set}.
\item For similar reasons, {\bf boundedness of the $P_j(r)$ is henceforth a critical assumption of our analysis.}  Because we do not assume continuity of the underlying densities, it is difficult, if not impossible, to resolve indeterminate forms such as $\infty/\infty$, which we would need to address if we did not assume boundedness. 
\end{itemize}
\end{remark}

\begin{definition}
Let $R_{j,k}(r)$ be a density ratio.  We refer to $\q_{j,k}: \Gamma_{j,k} \to [0,1]$ defined via
\begin{align}
\q_{j,k}(r) &= \frac{R_{j,k}(r)}{1+R_{j,k}(r)} = \frac{P_k(r)}{P_k(r) + P_j(r)} \label{eq:prevfun}
\end{align}
as the {\bf prevalence function}.
\label{def:prevfun}
\end{definition}

\begin{remark}
We adopt the convention that $R_{j,j}(r)=1$ and $\q_{j,j}(r)=1/2$, even when $P_j(r)=0$.  These equalities are important in multiclass settings.  
\end{remark}

The motivation for studying $\q_{j,k}(r)$ and $R_{j,k}(r)$ arises from three straightforward observations.  First, in the binary case it is clear that given $\q_{1,2}(r)$, this function can be inverted to yield the density ratio $R_{1,2}(r)$.  Second, the density ratio (and thus $\q_{1,2}$) is sufficient for computing the inherent uncertainty $u(r;\chi)$ of a test population.  To see this, note that in the binary case,
\begin{align}
1-u(r;\chi) = Z(r;\chi,U^\star) &= \frac{\chi_1 P_1(r) \mathbb I[r\in D_1^\star(\chi)] + \chi_2P_2(r) \mathbb I[r\in D_2^\star(\chi)]}{\chi_1 P_1(r)  + \chi_2P_2(r) } \nonumber \\
&= \frac{\chi_1 \mathbb I[r\in D_1^\star(\chi)] + \chi_2 R_{1,2}(r) \mathbb I[r\in D_2^\star(\chi)]}{\chi_1  + \chi_2 R_{1,2}(r) }.
\end{align}
In other words, the densities $P_j(r)$ are not actually needed.  Third, and perhaps most importantly, $\q(r)$ solves $r\in B^\star(q)$ in the sense that $\q_{1,2}(r)=q_1 \iff r \in B^\star(q)$.  Thus, if we can find the value of $q$ for which $r\in B^\star(q)$, we know $\q_{1,2}(r)$, and hence the $R_{1,2}(r)$.  

Intuitively, we can solve this set-theoretic equation by finding the value of the affine prevalence $q$ at which the class switches.  To make this precise, however, we need to guarantee that such a solution always exists and establish certain other properties.  It turns out that this is equivalent to demonstrating that the $B^\star(q)$ are level-sets in the following sense \cite{PartI}.  

\begin{proposition}[Level-Set Representation of a Binary Classifier]
Assume that the function $\hat C^\star: \Gamma \times \XT \to \{1,2\}$ is a binary Bayes classifier, and let $\hat C^\star(r,q)\cong U^\star(q)$.  Then the sets $B^\star(q)$ have the following properties:
\begin{itemize}
\item[I.]  ${B^\star(q) \subset D_1^\star(q')/B^\star(q')}$ and ${B^\star(q') \subset D_2^\star(q)/B^\star(q)}$ for $q'_1 > q_1$, and in particular, $B^\star(q) \cap B^\star(q') = \emptyset$ for $q\ne q'$;
\item[II.] for every $r\in \Gamma$, there exists one and only one value of $q$ for which $r \in B^\star(q)$, and in particular, $r  \in  B^\star(q) \iff \q_{1,2}(r)=q_1$; that is, $r\in B^\star(\q(r))$ for $\q(r)=(\q_{1,2}(r),1-\q_{1,2}(r))$;
\item[III.] $\chi_1 = \q_{1,2}(r)\in (0,1)$ yields the test prevalence $\chi=(\chi_1,1-\chi_1)$ for which $r$ has 50\% probability of belonging to either class; i.e.\ $B^\star(\chi)$ is the 50\% probability level-set.
\end{itemize}\label{prop:ls}
\end{proposition}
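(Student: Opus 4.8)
The plan is to reduce all three parts to one elementary observation about a single affine function of the prevalence parameter. For fixed $r\in\Gamma=\Gamma_{1,2}$, define $g(t;r)=t\,P_1(r)-(1-t)\,P_2(r)=t\,[P_1(r)+P_2(r)]-P_2(r)$ for $t\in[0,1]$, and write $q=(q_1,1-q_1)$. Since $r$ lies in the union of the supports, $P_1(r)+P_2(r)>0$, so $g(\cdot\,;r)$ is affine and strictly increasing, hence has a unique root $t^\star(r)=P_2(r)/[P_1(r)+P_2(r)]\in[0,1]$; using the second form in Def.~\ref{def:prevfun} and the surrounding remark this is exactly $\q_{1,2}(r)$ (a short split on whether $P_1(r)$ or $P_2(r)$ vanishes reconciles this with the $R_{1,2}$-based definition and the extended-arithmetic conventions, giving $t^\star(r)\in(0,1)$ when both densities are positive and $t^\star(r)\in\{0,1\}$ in the degenerate cases). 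Because $r\in B^\star(q)$ is by definition the statement $g(q_1;r)=0$, part II follows at once: there is one and only one such $q$, namely $q=\q(r)=(\q_{1,2}(r),1-\q_{1,2}(r))$, and $r\in B^\star(q)\iff\q_{1,2}(r)=q_1$.

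For part I I would fix $q_1'>q_1$ and argue by strict monotonicity of $g(\cdot\,;r)$. If $r\in B^\star(q)$ then $g(q_1;r)=0$, so $g(q_1';r)>0$, i.e.\ $q_1'P_1(r)>q_2'P_2(r)$; hence $r$ lies in $\{s:q_1'P_1(s)>q_2'P_2(s)\}\subset D_1^\star(q')$ and $r\notin B^\star(q')$, giving $B^\star(q)\subset D_1^\star(q')\setminus B^\star(q')$. Symmetrically, if $r\in B^\star(q')$ then $g(q_1;r)<0$, so $q_1P_1(r)<q_2P_2(r)$ and $r\in D_2^\star(q)\setminus B^\star(q)$, giving $B^\star(q')\subset D_2^\star(q)\setminus B^\star(q)$. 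Disjointness of $B^\star(q)$ and $B^\star(q')$ for $q\ne q'$ then follows either from these inclusions (for $q_1'>q_1$, $B^\star(q)\subset D_1^\star(q')\setminus B^\star(q')$ is disjoint from $B^\star(q')$) or, more directly, from the uniqueness established in part II.

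Part III is a short consequence of part II together with Eq.~\eqref{eq:condprob}. By the conventions, $\chi_1=\q_{1,2}(r)\in(0,1)$ occurs exactly when $P_1(r)>0$ and $P_2(r)>0$; for such $r$, part II gives $r\in B^\star(\chi)$, i.e.\ $\chi_1P_1(r)=\chi_2P_2(r)$. Substituting into $\Pr[C=1|r]=\chi_1P_1(r)/[\chi_1P_1(r)+\chi_2P_2(r)]$ makes the numerator exactly half the denominator, so $\Pr[C=1|r]=\Pr[C=2|r]=1/2$; conversely any $r$ with $\Pr[C=1|r]=1/2$ under prevalence $\chi$ satisfies $\chi_1P_1(r)=\chi_2P_2(r)$, hence $r\in B^\star(\chi)$. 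Thus $B^\star(\chi)$ is precisely the $50\%$-probability level set, which via $\q_{1,2}=R_{1,2}/(1+R_{1,2})$ is equivalently a level set of the density ratio $R_{1,2}$.

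All the computations are routine; the only place that needs genuine care is the bookkeeping at the boundary of the simplex, where $q_1\in\{0,1\}$ and one density vanishes. There one must check that $B^\star(q)$ remains well defined under the extended arithmetic, that the degenerate root $t^\star(r)\in\{0,1\}$ still coincides with $\q_{1,2}(r)$, and that the strict-inequality steps in part I survive (they do, since $P_1(r)+P_2(r)>0$ on $\Gamma_{1,2}$ keeps $g(\cdot\,;r)$ strictly increasing everywhere). I would also note explicitly that the proposition concerns the well-defined union $B^\star(q)$ and not the arbitrary sub-partition $\{B_1^\star(q),B_2^\star(q)\}$ of Def.~\ref{def:bayesoptimal}, so the equivalence-class ambiguity in the Bayes classifier plays no role in any of these statements.
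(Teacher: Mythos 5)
Your proof is correct. Note that the paper does not actually prove Proposition~\ref{prop:ls} in this manuscript (it is imported from Ref.~\cite{PartI}), but your reduction of all three parts to the single strictly increasing affine function $g(t;r)=t[P_1(r)+P_2(r)]-P_2(r)$ is exactly the argument sketched in the surrounding text, namely solving the ``set-theoretic equation'' $r\in B^\star(q)$ for $q_1$ and obtaining the unique root $\q_{1,2}(r)$; your bookkeeping at the simplex boundary and your observation that only the union $B^\star(q)$, not the arbitrary sub-partition $\{B_1^\star(q),B_2^\star(q)\}$, is at issue are both sound. The one hypothesis you should state explicitly rather than leave implicit is that ``support'' is taken to be the positivity set $\{r:P_j(r)>0\}$ (consistent with how the paper later builds the sets $\hat\Gamma_j$), since if supports were closures one could have $P_1(r)=P_2(r)=0$ at a point of $\Gamma_{1,2}$, in which case $r\in B^\star(q)$ for every $q$ and the uniqueness in part II would fail.
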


Proposition \ref{prop:ls} formalizes what it means for the boundary sets $B^\star(q)$ to be level-sets of the density ratios: they must be mutually exclusive, cover $\Gamma$, and be ordered in a meaningful way in terms of the $P_j(r)$.  This is also a geometric way of stating that the equation $r\in B^\star(q)$ always has a solution $\q_{1,2}(r)$.   From this perspective, the affine prevalence $q$ can be interpreted as the parameter that orders the level sets.

In light of monotonicity and Prop.\ \ref{prop:ls}, it is straightforward to prove that class-switching can be used to determine $\q_{1,2}(r)$.  In particular:    
\begin{proposition}[Class-Switching Representation of the Density Ratios]
Assume that $\hat C^\star: \Gamma \times \XT \to \{1,2\}$ is a binary Bayes classifier.  Then for every $r$, either: $\q_{1,2}(r)\in \{0,1\}$; or $\q_{1,2}(r) = q_l(r)=q_h(r)$, where
\begin{align}
q_l(r)=\sup \{q_1: \hat C^\star(r,q) = 2\}, && q_h(r)= \inf \{q_1: \hat C^\star(r,q) = 1\}.  \label{eq:class_switch}
\end{align}
\label{prop:bfr}
\end{proposition}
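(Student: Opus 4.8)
The plan is to convert the set-membership condition $r\in B^\star(q)$ into an explicit threshold on the scalar $q_1$ and simply read off $q_l(r)$ and $q_h(r)$ from it; Prop.~\ref{prop:ls} and the monotone-class structure of Lemma~\ref{lem:setinclusion} are what make this legitimate, but the actual verification is a short inequality manipulation. Fix $r\in\Gamma=\Gamma_{1,2}$. If $\q_{1,2}(r)\in\{0,1\}$ we are already in the first alternative of the statement, so assume $\q_{1,2}(r)\in(0,1)$ and write $\q:=\q_{1,2}(r)$. Since $\q=P_2(r)/(P_1(r)+P_2(r))\in(0,1)$, both $P_1(r)$ and $P_2(r)$ are strictly positive, so the density ratio $R_{1,2}(r)=P_2(r)/P_1(r)$ is finite and positive and $\q=R_{1,2}(r)/(1+R_{1,2}(r))$ by Def.~\ref{def:prevfun}.

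First I would rewrite the inequalities defining $D_1^\star(q)$ and $D_2^\star(q)$ in Def.~\ref{def:bayesoptimal} in terms of $q_1$, using $q=(q_1,1-q_1)$: dividing by $P_1(r)>0$ and rearranging shows that $q_1 P_1(r)>(1-q_1)P_2(r)\iff q_1>\q$, that $q_1 P_1(r)<(1-q_1)P_2(r)\iff q_1<\q$, and that $q_1 P_1(r)=(1-q_1)P_2(r)\iff q_1=\q$ (the last equivalence recovering Prop.~\ref{prop:ls}.II, namely $r\in B^\star(q)\iff q_1=\q$). Hence $\hat C^\star(r,q)=1$ for every $q_1>\q$, $\hat C^\star(r,q)=2$ for every $q_1<\q$, and at $q_1=\q$ the label is whichever of $1,2$ the arbitrary partition $B_1^\star(q),B_2^\star(q)$ assigns. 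Equivalently, one obtains $r\in D_1^\star(q')$ for all $q_1'>\q$ and $r\in D_2^\star(q)$ for all $q_1<\q$ directly from $r\in B^\star(\q(r))$ together with the nesting $B^\star(q)\subset D_1^\star(q')/B^\star(q')$ and $B^\star(q')\subset D_2^\star(q)/B^\star(q)$ in Prop.~\ref{prop:ls}.I.

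The conclusion is then immediate: the set $\{q_1\in[0,1]:\hat C^\star(r,q)=2\}$ contains $[0,\q)$ and is contained in $[0,\q]$, hence is nonempty with supremum $q_l(r)=\q$; symmetrically $\{q_1\in[0,1]:\hat C^\star(r,q)=1\}$ contains $(\q,1]$ and is contained in $[\q,1]$, hence $q_h(r)=\q$. Therefore $q_l(r)=q_h(r)=\q=\q_{1,2}(r)$, which is the second alternative. I do not anticipate any genuine obstacle; the one point that needs care is the bookkeeping at the endpoints, which is precisely why $\q_{1,2}(r)\in\{0,1\}$ is split off: if $P_2(r)=0$ then $\{q_1:\hat C^\star(r,q)=2\}$ is either empty or $\{0\}$ depending on how the measure-zero boundary $B^\star$ is assigned, so $q_l(r)$ need not equal $\q_{1,2}(r)=0$, and dually if $P_1(r)=0$ --- but in those cases the first alternative already holds, so the proposition is unaffected.
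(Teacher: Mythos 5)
Your proposal is correct and follows essentially the same route as the paper's proof: identify the unique threshold $\q_{1,2}(r)$ at which $r\in B^\star(q)$, show the class is $2$ below it and $1$ above it, and read off the supremum and infimum, with the degenerate cases $P_1(r)=0$ or $P_2(r)=0$ split off into the first alternative. The only difference is cosmetic — you verify the threshold inequality $q_1\gtrless\q_{1,2}(r)$ directly from Def.~\ref{def:bayesoptimal}, whereas the paper reaches the same conclusion by citing Prop.~\ref{prop:ls} and Lemma~\ref{lem:setinclusion}.
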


\begin{proof}
Let $U^\star(q) \cong \hat C^\star(r,q)$.  Fix a value of $r$.  Assume first that $P_j(r) > 0$ for $j=1,2$.    By Prop.\ \ref{prop:ls}, there exists a $\tilde q$ for which $\tilde q_1\in (0,1)$ such that $r\in B^\star(\tilde q)$.  By Lemma \ref{lem:setinclusion} and Def.\ \ref{def:bayesoptimal}, we know that for $q_1 > \tilde q_1$, the point $r\in D^\star_1(q)$, and hence $\hat C^\star(r,q)=1$.  For $q_1 < \tilde q_1$, one finds $\hat C^\star(r,q)=2$.  Since $\hat C^\star(r,q)$ is binary and monotone in $q$ for fixed $r$, there can only be one point of discontinuity, which must be $\tilde q$.  Equality of the supremum and infimum also follows by monotonicity.  

If $P_1(r) = 0$ and $P_2(r) > 0$, by the Def.\ \ref{def:bayesoptimal}, then $r\in D_2^\star(q)$ for any finite value of $q_1$, and in particular, one finds that $r\in B^\star(q)$ for $q_1=1$.  Thus we set $\q_{1,2}=1$.  A similar argument yields $\q_{1,2}=0$ when $P_1(r)>0$ and $P_2(r) = 0$.  
\end{proof}

\begin{remark}
Reference \cite{Langford05} considered a variation on Prop.\ \ref{prop:bfr}, which they called \textit{probing.}  A key distinction of our analysis is its ability to extend to the multiclass case when the conditional distributions have disjoint supports.  
\end{remark}

\section{Class-Switching in the Multiclass Setting}
\label{sec:level-set}

The class-switching representation of the density ratios demonstrates how to extract probabilistic information from binary Bayes classifiers.  While the $R_{j,k}(r)$ do not fully specify all of the distributional information in the $P_j(r)$, in the binary case, they are sufficient constructing $u(r)$.  

In this context, the purposes of the present section are twofold.  First, we wish to generalize the results of the previous section to a multiclass setting.  In particular, we show that given a multiclass Bayes classifier in the sense of Eq.\ \eqref{eq:promote}, we can still extract sufficient probabilistic information to compute aleatoric uncertainty.  
A surprising outcome of this exercise is the identification of two additional properties, self-consistency and normalizability, that all Bayes classifiers satisfy.  This leads to the second purpose of the present section, which is to study the extent to which these properties, along with monotonicity, are sufficient for assigning a meaningful probabilistic interpretation to arbitrary classifiers; see also the discussion at the beginning of Sec.\ \ref{sec:context}.

\subsection{Constructing Bayes Classifiers from Pairwise Counterparts}

In a multiclass setting, extracting information about the densities $P_j(r)$ can be achieved by pairwise application of the class-switching representation.  To motivate this, observe that the multiclass Bayes classifier is given by 
\begin{align}
D_j^\star(q) &= \{r:q_jP_j(r) > q_kP_k(r), \forall k, k\ne j\} \cup B_j^\star(q) \nonumber \\
&=\{r:q_j/q_k > R_{j,k}(r), \forall k, k\ne j\}\cup B_j^\star(q), \label{eq:multibayes}
\end{align}
where $B_j^\star(q)$ is a subset of the set $\{r:\exists k, k\ne j: q_jP_j(r) = q_kP_k(r) \}$, which is the multiclass generalization of the boundary set.  As in the binary case,  the optimal classification domains only depend on the pairwise density ratios.

These observations suggest that we consider the restriction of $\hat C^\star$ to the boundary $\partial \X$ of the probability simplex as follows.

%
%
%
\begin{definition}[Pairwise Classifiers]\label{def:pairwise}
Let $\hat C:\Gamma \times \X \to \K$ be a monotone classifier for $K > 2$.  Moreover, let $q(\alpha;j,k): \XT \to \X$ for $j\ne k$ be a vector valued function of $\alpha\in \XT$ having the property that $q(\alpha;j,k)_j = \alpha_1$, $q(\alpha;j,k)_k = \alpha_2= 1-\alpha_1$, and $q(\alpha;j,k)_m = 0$ for $m\ne j,k$.  We define $\hat C_{j,k}(r,\alpha) = \hat C(r;q(\alpha;j,k))$ to be the \textbf{pairwise classifiers} induced by $\hat C$, where $\hat C_{j,k}:\Gamma_{j,k} \times \XT \to \{j,k\}$.  
\end{definition}

\begin{remark}
In contrast with Def.\ \ref{def:monotoneclassifier}, the definition of a pairwise classifier does not require that $j< k$.  However, note that there are only $K(K-1)/2$ unique pairwise classifiers for $\hat C:\Gamma \times \X \to \K$, and similarly for the $R_{j,k}$ and $\q_{j,k}$, and we can construct all such functions by only considering those for which $j< k$.  
\end{remark}

In practice, it is generally easier to construct a pairwise classifier than its multiclass counterpart.  Thus it is reasonable to assume that one always has access to the $\hat C^\star_{j,k}$, from which Prop.\ \ref{prop:bfr} yields the relative density ratios $R_{j,k}$.  We now show that this information is sufficient for both constructing the multiclass Bayes classifier and estimating the inherent uncertainty associated with any population.

\begin{construction}[Pairwise Construction of Multiclass Bayes Classifier]\label{constr:multi}
We construct a Bayes classifier $\hat C^\star: \Gamma \times \X \to \K$ in the multiclass setting using only pairwise training.  Assume that the support $\Gamma_j$ of each $P_j(r)$ is the full measurement space, i.e.\ $\Gamma=\Gamma_j$.  Let $\{\hat C_{j,k}^\star\}$ be the $K(K-1)/2$ unique pairwise Bayes classifiers.  By Proposition \ref{prop:bfr}, for any point $r$, we can determine $\q_{j,k}(r)$ in terms of the value of $\alpha$ at which the $\hat C_{j,k}^\star(r,\alpha)$ is discontinuous.

Consider next a test population $\Omega(\chi)$ with test prevalence $\chi$ \textcolor{black}{in the interior of $\X$}.  Given the pairwise prevalence functions, we can construct the pointwise accuracy of an arbitrary classifier $\hat C:\Gamma \to \K$ as 
\begin{align}
 Z(r;\chi,\hat C)&=\sum_{j=1}^K \frac{\chi_jP_j(r)}{Q(r;\chi)} \mathbb I\left(\hat C(r) = j \right) \nonumber \\
&= \sum_{j=1}^K \frac{\chi_j}{\sum_{k}\chi_k R_{j,k}(r)} \mathbb I\left(\hat C(r) = j \right) \nonumber \\
&= \sum_{j=1}^K \frac{\chi_j}{\sum_{k}\chi_k \q_{j,k}(r)/[1-\q_{j,k}(r)]} \mathbb I\left(\hat C(r) = j \right). \label{eq:MultiZacc}
\end{align}  
In order to ensure that $\hat C(r)$ is the Bayes classifier, define
\begin{align}
\hat C(r) = \argmax_j \left[\frac{\chi_jP_j(r)}{Q(r;\chi)} \right] =
 \argmax_j \left[ \frac{\chi_j}{\sum_{k}\chi_k \q_{j,k}(r)/[1-\q_{j,k}(r)]} \right], \label{eq:argmaxclass}
\end{align}
\textcolor{black}{where we may assign $\hat C(r)$ to any class attaining the maximum in the event that more than one $j$ solves Eq.\ \eqref{eq:argmaxclass}.}  Moreover, observe that the resulting $\hat Z(r;\chi,\hat C)=1-u(r)$ yields the inherent uncertainty, and thus $\hat C$ is  the Bayes classifier for test prevalence $\chi$.  That is, $\hat C(r)=\hat C^\star(r,\chi)$.   \hfill {\small Q.E.F.}
\end{construction}

A key, and perhaps surprising, outcome of Construction \ref{constr:multi} is that when the supports of the $P_j(r)$ fully overlap, a Bayes classifier never need be trained on all classes simultaneously, only pairs thereof.  This is sometimes referred to as an \textit{all-pairs} or a \textit{one-versus-one} training paradigm \cite{MultiStrategies,OneVOne1,OneVOne2,OneVOne3,OneVOne4}; see also Ref.\ \cite{Reduction} and the concept of reduction.  This approach also stands in contrast to other popular methods such as \textit{one-versus-all} \cite{OneVAll1,OneVAll2}; see also Refs.\ \cite{Pairwise1,Pairwise2}.  Construction \ref{constr:multi} was foreshadowed in Refs.\ \cite{Langford05,MultiGen} but does not appear to have been proved rigorously.  Here it is critical to note that a straightforward extension of binary class-switching to the multiclass setting relies on the rather restrictive assumption that the $\Gamma_j = \Gamma$, which is not reasonable in many settings.  

Generalizing this result to the case where $\Gamma_{j,k}\ne \Gamma$ requires significant care.  The problem arises from the fact that at a point $r$ for which $P_j(r)=P_k(r)=0$, the density ratio is indeterminate.  In principle, this means that we can just ignore such points when using Eq.\ \eqref{eq:argmaxclass}.  But in practice, one wishes to apply a classifier to test data, for which one does not know \textit{a priori} when a point falls outside the support of a given distribution.  Perhaps worse, numerical realization of a classifier is likely to lead to a situation in which $P_j(r)=P_k(r)=0$, and yet the induced pairwise classifier still yields values $\hat C^\star_{j,k}(r,\alpha)\in \{j,k\}$.  In this situation, several outcomes are possible.  For example,  it is possible that the classifier is not monotone, in which case it is not obvious how to compute $\q_{j,k}(r)$.  Alternatively, the classifier may be monotone and yield a positive, bounded value of $R_{j,k}(r)$, not an indeterminate form.  In other words, for a Bayes classifier constructed according to Eq.\ \eqref{eq:argmaxclass}, we cannot know beforehand whether a pairwise class label is spurious, and we must assume that all pairwise classifiers will be evaluated on the full set $\Gamma$, not necessarily restricted to their respective $\Gamma_{j,k}$.  Thus, we can at most say that $0\le \q_{j,k} \le 1$ for $r\notin \Gamma_{j,k}$, which is an extremely weak conclusion.

It turns out that even in such situations, we can still construct the multiclass Bayes classifier, given certain assumptions on its pairwise counterparts.  In particular, it is necessary to define reasonable extensions of the density ratios and prevalence functions to points outside of their native domains.


\begin{definition}
Let $\hat C_{j,k}$ be a pairwise classifier defined on $\Gamma_{j,k}$, i.e. $\hat C_{j,k}:\Gamma_{j,k}\to \{j,k\}$.  We say that $\hat \C_{j,k}$ is an {\bf extension of} $\hat C_{j,k}$ {\bf onto} $\Gamma$ if $\hat \C_{j,k}(r,\alpha) = \hat C_{j,k}(r,\alpha)$ for  $r\in \Gamma_{j,k}$ and $\hat \C_{j,k}(r,\alpha)\in \{j,k\}$ for $r\in \Gamma / \Gamma_{j,k}$.
\end{definition}

\begin{definition}\label{def:qextension}
Let $\hat C^\star_{j,k}$ be the pairwise Bayes classifiers induced by $\hat C^\star$, and let $\hat \C_{j,k}^\star$ be any extension of the classifiers to $\Gamma$.  We say that $\tilde \q_{j,k}$ is an extension of the prevalence function onto $\Gamma$ if $\tilde \q_{j,k}(r) = \q_{j,k}(r)$ for $r\in \Gamma_{j,k}$ and $\q_{j,k}(r) \in [0,1]$ for $r\in \Gamma / \Gamma_{j,k}$.
\end{definition}


\begin{construction}\label{constr:genmulti}
Let $\Gamma_j$ be the support of $P_j(r)$.  We construct the Bayes optimal classifier on $\Gamma$ under the assumption that $\Gamma_{j} \ne \Gamma$ for at least one $j$.   

To accomplish this, fix $r$ and let $\C_{m,n}^\star$ be arbitrary extensions of the pairwise classifiers onto $\Gamma$, with $\tilde \q_{m,n}$ the corresponding extensions of the prevalence function.  \textcolor{black}{As before, let $\chi$ be in the interior of $\X$.}   Clearly there must be at least one value of $m$ for which $r\in \Gamma_m$.  Assume first that $r\notin \Gamma_n$ for some $n\ne m$.   Note that $\q_{m,n}(r)$ and $\q_{n,m}(r)$ are defined on $\Gamma_{m,n}=\Gamma_m \cup \Gamma_n$, so that $\tilde \q_{m,n}(r)=\q_{m,n}(r)$ and $\tilde \q_{n,m}(r) = \q_{n,m}(r)$ on this set.  Extend Eq.\ \eqref{eq:argmaxclass} to $\Gamma$ according to
\begin{align}
\hat C(r) = \argmax_j \left[ \frac{\chi_j}{\sum_{k}\chi_k \tilde \q_{j,k}(r)/[1-\tilde \q_{j,k}(r)]} \right]. \label{eq:extargmaxclass}
\end{align}
Because $r\notin \Gamma_n$, $\q_{m,n}=0$ and $\q_{n,m}=1$.  Examining Eq.\ \eqref{eq:extargmaxclass}, the term
\begin{align}
\frac{\chi_n}{\sum_{k}\chi_k \tilde \q_{n,k}(r)/[1-\tilde \q_{n,k}(r)]}=0, 
\end{align}
irrespective of the values of $\tilde q_{n,k}(r)$ for $k\ne m$, since these are all bounded.  Thus, $\hat C(r) \ne n$.  Because $n$ was an arbitrary class for which $P_n(r)=0$, Eq.\ \eqref{eq:extargmaxclass} does not assign $r$ to any class for which $r$ is not in the support of the corresponding PDF.  

Next let $J$ be the set of classes for which $P_j(r) > 0$.  That is, $P_j(r) > 0 \implies j\in J$.  Because $\tilde q_{j,n}=0$ if $r\notin \Gamma_n$, one finds that Eq.\ \eqref{eq:extargmaxclass} reduces to
\begin{align}
\hat C(r) = \argmax_{j\in J} \left[ \frac{\chi_j}{\sum_{k\in J}\chi_k \tilde \q_{j,k}(r)/[1-\tilde \q_{j,k}(r)]} \right]. \label{eq:redextargmaxclass}
\end{align}
This is clearly the Bayes optimal classifier on the set $J$ of classes $j\in J$ for which $r\in \Gamma_j$.  Thus, Eq.\ \eqref{eq:extargmaxclass} is the  Bayes optimal classifier on $\Gamma$.  \hfill {\small Q.E.F.}
\end{construction}

\subsection{Determining When a Classifier is (not) Bayes Optimal}

Up to this point, we have analyzed the properties of Bayes classifiers, tacitly assuming that these are the outputs of a training paradigm.  In particular, we have assumed existence of the densities $P_j(r)$ and used these to study properties of $\hat C^\star$.  But we may also consider whether a set of \textit{arbitrary} pairwise classifiers implies the existence of density functions.  To accomplish this, we ask whether it is possible to know \textit{a priori} (i.e. without access to a training population) that a classifier $\hat C$ is actually Bayes optimal across all of $\Gamma \times \X$.  Moreover, we seek to deduce  properties of the conditional densities from $\hat C$.

The motivation for this question arises from the fact that classifiers are often constructed from imperfect training processes involving finite data and/or incomplete optimization.  Thus, it is useful to have techniques to assess (i) whether a training population is ``induced'' by a classifier, and (ii) if that  population is ``close'' to the available training data.  Lacking the first property, it is not clear that meaningful uncertainty estimates can be assigned to a classifier; see Sec.\ \ref{subsec:imp4UQ}.  Moreover, in applications such as image classification, it is often unclear how to express $P_j$ in an intelligible way, so that when $\hat C^\star:\Gamma \times \X \to \K$ exists, it is in fact the representation of those densities.  Equally important, it is useful to have necessary and sufficient conditions that guarantee a set of pairwise classifiers can be used to construct a multiclass Bayes classifier in the spirit of Construction \ref{constr:genmulti}.  Such conditions could inform numerical optimization algorithms in ML and address unresolved issues of probabilistic consistency discussed in Refs.\ \cite{Halck}.

As of yet, we are unaware of a complete solution to this problem, and herein we only present partial results.  In particular, it is straightforward to prove that monotone binary classifiers with ``normalizable'' density ratios are always Bayes classifiers for which we can extract density functions.  We also argue that the general problem can be restated in terms of constrained optimization.  The issue amounts to the fact that the $P_j(r)$ are required to be both normalizable and non-negative, which is non-trivial to guarantee in the multiclass case.  In the following sections, we therefore limit ourselves to (i) deriving additional properties of Bayes classifiers, (ii) showing that these are necessary conditions for an arbitrary $\hat C$ to induce densities $P_j$, and (iii) examining special cases.

\subsubsection{Law of Total Probability for Classifiers}

We begin by examining the argument of Eq.\ \eqref{eq:redextargmaxclass}, which satisfies an important normalization property.  In particular, note that  for any test prevalence $\chi$ on the \textit{interior} of $\X$ (i.e.\ $0< \chi_j < 1$ for all $j\in \K$), 
\begin{align}
\sum_{j\in J}  \frac{\chi_j}{\sum_{k\in J}\chi_k \tilde \q_{j,k}(r)/[1-\tilde \q_{j,k}(r)]} = 1,  \label{eq:oneconsistency}
\end{align}
which is verified by recasting the prevalence functions in terms of the densities $P_j(r)$.  In so doing, note that Eq.\ \eqref{eq:oneconsistency} is simply the law of total probability in disguise.

It is also trivial to show that the density ratios formally satisfy the relationship
\begin{align}
R_{j,k}R_{k,m}=R_{j,m},  \label{eq:Ridentity}
\end{align}
when they do not involve indeterminate forms.  In light of Eq.\ \eqref{eq:prevfun}, this second identity can be massaged into the form
\begin{align}
\frac{\q_{j,k}(r)}{[1-\q_{j,k}(r)]}\frac{\q_{k,m}(r)}{[1-\q_{k,m}(r)]} = \frac{\q_{j,m}}{1-\q_{j,m}}. \label{eq:qidentity}
\end{align}
Because these last two identities are clearly equivalent, we henceforth focus on Eq.\ \eqref{eq:Ridentity}, which is formally easier to manipulate.

Equations \eqref{eq:oneconsistency} -- \eqref{eq:qidentity} are significant because they are testable properties of pairwise classifiers trained in a multiclass setting.  In fact, we prove below Eq.\ \eqref{eq:Ridentity} is also equivalent to Eq.\ \eqref{eq:oneconsistency}, so that we need only test one  to verify both.  Before doing so, however, we intuitively motivate why Eq.\ \eqref{eq:Ridentity} is important.  In particular, consider a triple of density ratios in a three-class setting.  The choices
\begin{subequations}
\begin{align}
R_{1,2}(r) &= \epsilon, \label{eq:not2} \\
R_{2,3}(r) &= \epsilon, \label{eq:not1}\\
R_{1,3}(r) &= 1, \label{eq:not7}
\end{align}
\end{subequations}
for $\epsilon \ll 1$ clearly fail to satisfy Eq.\ \eqref{eq:Ridentity}.  We can interpret the first two ratios as stating that $r$ is ``not in class 2'' and ``not in class 3,'' but the last density ratio asserts that classes 1 and 3 have equal probability.  Thus it appears that $r$ does not belong in any of the classes, which is nominally a contradiction.  In other words, Eq.\ \eqref{eq:Ridentity} ensures that the classifier yields  a valid interpretation: if $K-1$ classes have low probability, the last must have high probability.

To make this intuition more rigorous, it is useful to extend the concepts of prevalence functions and density ratios to arbitrary classifiers that we do not know \textit{a priori} to be Bayes optimal.  In particular:
\begin{definition}\label{def:inducedR}
Let $\hat C:\Gamma \times \X \to \K$ be a monotone classifier, and let $\hat C_{j,k}: \XT \to \{j,k\}$ be the induced pairwise classifiers for $j\ne k$.  Then the \textbf{induced prevalence function} $\hat \q_{j,k}(r)$ at point $r$ is the value of $\alpha_1$ at which $\hat C_{j,k}(r,\alpha)$ is discontinuous, where we define $\hat \q_{j,j}(r)=1/2$.  Moreover, the function $\hat R_{j,k}:\Gamma \to [0,\infty]$ defined by
\begin{align}
\hat R_{j,k}(r)=\frac{\hat \q_{j,k}(r)}{1-\hat \q_{j,k}(r)}
\end{align}
is the \textbf{induced density ratio}. 
\end{definition}

We remind the reader that Def.\ \ref{def:inducedR} \textit{does not imply} that $\hat \q_{j,k}(r)$ and $\hat R_{j,k}(r)$ are inherently meaningful.  As opposed to a classifier $\hat C^\star$ that we know to be Bayes optimal, we do not know in what sense the induced density ratios actually quantify information about any probability densities $P_j(r)$.  Our goal is to determine whether this is true.  To achieve this, we require the following additional property.

\begin{definition}[Self-Consistency]
Let $\hat C:\Gamma \times \X \to \K$ be a monotone classifier with induced density ratios $\hat R_{i,j}$.  Then we say that $\hat C$ is \textbf{self-consistent} if for each $r$ there exist two disjoint sets $W(r)\subset \K$ and $V(r)=\K/W$ (which depend on $r$) such that:
\begin{itemize}
\item  for all $i,j\in W(r)$, the $\hat R_{i,j}(r)$ are positive, finite, and satisfy $\hat R_{i,j}(r)=\hat R_{j,i}^{-1}(r)$; 
\item  for all $i,j,k \in W(r)$, $\hat R_{i,j}(r)\hat R_{j,k}(r)=\hat R_{i,k}(r)$; 
\item  for $i\in W(r)$, $j\in V$, $\hat R_{i,j}(r) = 0$ and $\hat R_{j,i}(r)=\infty$.
\end{itemize}
We also say that the induced density ratios $\hat R_{i,j}(r)$ are self-consistent.
\end{definition}

We can prove that self-consistency implies that a classifier agrees with the law of total probability.  However, this result requires a preliminary lemma about the structure of density ratios.

\begin{lemma}\label{lem:ridentity}
Let $\hat R_{j,k}:\Gamma \to [0,\infty]$ for $j,k\in \K$.  Then \textcolor{black}{excluding indeterminate forms,} the $\hat R_{j,k}$ can be expressed in terms of at most $K$ non-negative, bounded functions via Eq.\ \eqref{eq:relprob} if and only if the $\hat R_{j,k}(r)$ are self-consistent.  
\end{lemma}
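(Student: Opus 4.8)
The plan is to prove both implications pointwise in $r$, since the representation \eqref{eq:relprob} and the definition of self-consistency are both conditions imposed at each $r$ separately. For the forward (``only if'') direction, suppose $\hat R_{j,k}(r)=P_k(r)/P_j(r)$ for non-negative bounded functions $P_1,\dots,P_K$. Fix $r$, set $W(r)=\{j\in\K:P_j(r)>0\}$ and $V(r)=\K\setminus W(r)$. For $i,j\in W(r)$ the quotient $P_j(r)/P_i(r)$ is positive and finite, and one reads off $\hat R_{i,j}(r)\hat R_{j,i}(r)=1$ and $\hat R_{i,j}(r)\hat R_{j,k}(r)=(P_j/P_i)(P_k/P_j)(r)=\hat R_{i,k}(r)$; for $i\in W(r)$, $j\in V(r)$ one gets $\hat R_{i,j}(r)=0/P_i(r)=0$ and $\hat R_{j,i}(r)=P_i(r)/0=\infty$. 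These are exactly the three bullets of self-consistency, with $(W(r),V(r))$ serving as witnesses.

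For the converse, fix $r$ and let $(W(r),V(r))$ be a pair witnessing self-consistency. I would first observe that at most one nonempty $W(r)$ can do so: if $W_1\ne W_2$ were both nonempty witnesses, pick $i\in W_1\setminus W_2$ (possible after swapping labels) and any $j\in W_2$; the bullets for $W_2$ force $\hat R_{i,j}(r)=\infty$, whereas the bullets for $W_1$ force $\hat R_{i,j}(r)$ to be finite (if $j\in W_1$) or $0$ (if $j\in V_1$), a contradiction. So the choice is canonical, with $W(r)=\emptyset$ only when no nonempty witness exists. Now define $$ P_k(r)=\left(\sum_{j\in W(r)}\hat R_{k,j}(r)\right)^{-1} \text{ for } k\in W(r), \qquad P_k(r)=0 \text{ for } k\in V(r) $$ (and $P_k(r)=0$ for all $k$ when $W(r)=\emptyset$). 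The sum is a finite sum of positive finite terms containing $\hat R_{k,k}(r)=1$, hence lies in $[1,\infty)$, so $P_k(r)\in(0,1]$ and each $P_k$ is a non-negative bounded function on $\Gamma$. To verify the representation: for $k,l\in W(r)$ the cocycle bullet gives $\hat R_{k,l}(r)\sum_{j\in W(r)}\hat R_{l,j}(r)=\sum_{j\in W(r)}\hat R_{k,l}(r)\hat R_{l,j}(r)=\sum_{j\in W(r)}\hat R_{k,j}(r)$, so $P_l(r)/P_k(r)=\hat R_{k,l}(r)$; for $k\in W(r)$, $l\in V(r)$, $P_l(r)/P_k(r)=0=\hat R_{k,l}(r)$; for $k\in V(r)$, $l\in W(r)$, $P_l(r)/P_k(r)=\infty=\hat R_{k,l}(r)$; and for $k,l\in V(r)$ both densities vanish, which is precisely the indeterminate ($0/0$) case excluded by the statement. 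Thus the $K$ functions $P_j$ realize all the determinate density ratios.

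The only step that is not purely formal, and hence the main obstacle, is boundedness of the recovered $P_j$: reconstruction relative to a fixed reference class, $P_k(r)=\hat R_{i_0,k}(r)$, reproduces the ratios but can be unbounded, and it also requires $i_0\in W(r)$, which varies with $r$. Rescaling each fixed-$r$ slice so that $\sum_k P_k(r)=1$ removes both defects at once: it leaves every ratio $P_l(r)/P_k(r)$ unchanged, pins $P_k(r)$ into $[0,1]$, and is manifestly independent of any reference choice -- which is why I use the reference-free form above. A smaller technical point is the well-definedness of the witness pair $(W(r),V(r))$, which is handled by the short case analysis indicated above; note also that the lemma as stated asks only for non-negative bounded functions, so no measurability of the $P_j$ needs to be established.
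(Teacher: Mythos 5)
Your proof is correct, and its overall architecture matches the paper's: the forward direction is read off directly from Eq.~\eqref{eq:relprob}, and the converse is handled pointwise in $r$ by explicitly constructing the $P_j$. The difference lies in the key constructive step. The paper writes each positive, finite $\hat R_{j,k}$ as a ratio $f_{j,k}/g_{j,k}$ and then argues that the cocycle identity $\hat R_{j,k}\hat R_{k,m}=\hat R_{j,m}$ permits a rescaling so that $f_{k,m}=g_{j,k}=P_k$ for all $j,m$ — an argument that produces the $P_j$ at each fixed $r$ but never addresses whether the resulting functions of $r$ are bounded, which the statement of the lemma requires. Your closed-form choice $P_k(r)=\bigl(\sum_{j\in W(r)}\hat R_{k,j}(r)\bigr)^{-1}$ buys two things the paper's argument does not explicitly deliver: (i) boundedness is automatic, since the sum contains the term $\hat R_{k,k}(r)=1$ and hence $P_k(r)\in(0,1]$ uniformly in $r$; and (ii) the construction is reference-free, so there is no dependence on a choice of base index $i_0\in W(r)$ that would vary with $r$. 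The verification $P_l(r)/P_k(r)=\hat R_{k,l}(r)$ via $\sum_{j}\hat R_{k,j}=\hat R_{k,l}\sum_{j}\hat R_{l,j}$ is clean, and your observation that the witness pair $(W(r),V(r))$ is unique (when nonempty) is a worthwhile addition that neither the definition of self-consistency nor the paper's proof settles. In short: same strategy, but your normalization closes a small gap in the paper's treatment of boundedness and makes the construction canonical.
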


\begin{proof}
Clearly Eq.\ \eqref{eq:relprob} implies that the density ratios are self-consistent.  Thus, it is sufficient to prove the converse for fixed values of $r$.

To do so, let there be $K^2$ numbers $\hat R_{j,k}$ for $j,k\in \K$, where we temporarily omit the dependence of $\hat R_{i,j}$ on $r$.  Assume first that the $\hat R_{j,k}$ are positive, finite, and self-consistent.  Clearly we can decompose the $\hat R_{j,k}$ into ratios of numbers $f_{j,k}$ and $g_{j,k}$ via
\begin{align}
\hat R_{j,k} = \frac{f_{j,k}}{g_{j,k}}.  
\end{align} 
The identity $\hat R_{k,j}=\hat R_{j,k}^{-1}$ implies that we can express $\hat R_{k,j}=g_{j,k}/f_{j,k}$.  Finally, observe that the $f_{j,k}$ and $g_{j,k}$ are only defined up to the same multiplicative constant.  Considering the identity
\begin{align}
\hat R_{j,k}\hat R_{k,m}=\hat R_{j,m} = \frac{f_{j,k}f_{k,m}}{g_{j,k}g_{k,m}},
\end{align}
we see that this remains true if we scale $f_{k,m}$ such that $f_{k,m}=g_{j,k}$.  Because this holds for any $j,m$, we can further restrict $f_{k,m}=P_k=g_{j,k}$.  Thus, we can express the $\hat R_{j,k}$ as ratios of at most $K$ numbers $P_j$.  

In the event that one or more of the $\hat R_{j,k}$ takes the values of $0$ or $\infty$, self-consistency implies that there exists a subset $W\subset \K$ such that the density ratios are positive and bounded.  By the previous argument, we may clearly define the $\hat R_{j,k}$ as ratios of the form $\hat R_{j,k}=P_k/P_j$ for $j,k\in W$.  Next consider $m\in V$.  The choice $P_m=0$ agrees with the assignment $\hat R_{j,m}=P_m/P_j=0$ and $\hat R_{m,j}=P_j/P_m=P_j/0=\infty$, \textcolor{black}{with the remaining density ratios indeterminate (i.e.\ taking any value).}  \qed
\end{proof}

\begin{remark}
In Lemma \ref{lem:ridentity}, we identify the set $W(r)$ as those classes for which $P_j(r)>0$, whereas $V(r)$ is the set of classes for which $P_j(r)=0$.  Thus, if $i,j\in V(r)$, then $r\notin \Gamma_{i,j}$, and we treat $\hat R_{i,j}(r)$ as density ratios associated with indeterminate forms, for which we cannot assign a meaningful interpretation.  For this reason, we may informally interpret $\hat R_{i,j}(r)$ as extensions of the true density ratios onto $\Gamma$ in the spirit of Def.\ \ref{def:qextension}.
\end{remark}

\begin{proposition}\label{prop:reverse}
Let $\hat C:\Gamma \times \X$ be a self-consistent classifier, and let $\hat R_{j,k}$ be the induced density ratios.  Then the induced prevalence functions $\hat \q_{j,k}$ satisfy Eq.\ \eqref{eq:oneconsistency}.  Moreover, if there exists a Bayes classifier $\hat C^\star$ having density ratios $\hat R_{j,k}$, then $\hat C$ is that classifier.  In particular, the inherent uncertainty $u(r)$ is given by Eq.\ \eqref{eq:MultiZacc} with $\q_{i,j}(r) = \hat \q_{i,j}(r)$.  
\end{proposition}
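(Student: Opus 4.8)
The plan is to prove the three assertions in turn: first that self-consistency alone forces the induced prevalence functions to obey the law of total probability, Eq.~\eqref{eq:oneconsistency}; second that, under the auxiliary hypothesis that some Bayes classifier $\hat C^\star$ realizes the induced density ratios, $\hat C$ must equal it; and third that the inherent uncertainty is then read off from Eq.~\eqref{eq:MultiZacc}. For the first part I would fix $r$ and apply Lemma~\ref{lem:ridentity}: self-consistency of the $\hat R_{j,k}(r)$ furnishes the partition $\K=W(r)\cup V(r)$ together with non-negative numbers $P_j(r)$, strictly positive for $j\in W(r)$ and vanishing for $j\in V(r)$, such that $\hat R_{j,k}(r)=P_k(r)/P_j(r)$ whenever $j,k\in W(r)$. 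By Def.~\ref{def:inducedR} we have $\hat\q_{j,k}(r)/[1-\hat\q_{j,k}(r)]=\hat R_{j,k}(r)$, and the remark after Lemma~\ref{lem:ridentity} identifies the index set $J$ of Eq.~\eqref{eq:oneconsistency} with $W(r)$; substituting for $\chi$ in the interior of $\X$ gives
\begin{align}
\sum_{j\in W(r)}\frac{\chi_j}{\sum_{k\in W(r)}\chi_k\,\hat\q_{j,k}(r)/[1-\hat\q_{j,k}(r)]}
=\sum_{j\in W(r)}\frac{\chi_j P_j(r)}{\sum_{k\in W(r)}\chi_k P_k(r)}=1,
\end{align}
the common denominator being positive since $W(r)\ne\emptyset$ and every $\chi_k>0$. (Summing instead over all of $\K$ changes nothing: for $j\in V(r)$ one has $\hat R_{j,i}(r)=\infty$ for $i\in W(r)$, so that term vanishes.) As noted before the proposition, this identity is precisely the classifier form of the law of total probability.

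For the second part I would argue as follows. Suppose $\hat C^\star$ is a Bayes classifier whose induced density ratios coincide with the $\hat R_{j,k}$. By Prop.~\ref{prop:bfr}, class-switching on the pairwise restrictions of $\hat C^\star$ recovers its true prevalence functions $\q_{j,k}$, and since $\q_{j,k}$ and $\hat\q_{j,k}$ are tied to the density ratios by the same Möbius relation, the hypothesis forces $\q_{j,k}(r)=\hat\q_{j,k}(r)$ on $\Gamma_{j,k}$ for every pair. Monotonicity of $\hat C$ makes each pairwise restriction $\hat C_{j,k}(r,\cdot)$ a monotone binary function switching at $\hat\q_{j,k}(r)$, so it agrees with the corresponding restriction of $\hat C^\star$ off the boundary sets on every edge of $\X$. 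Construction~\ref{constr:genmulti} expresses $\hat C^\star(r,\chi)$, for $\chi$ in the interior of $\X$, as the $\argmax$ over $J=W(r)$ in Eq.~\eqref{eq:redextargmaxclass} built from exactly those prevalence functions; running the same reconstruction from the $\hat\q_{j,k}$ of $\hat C$ therefore reproduces $\hat C^\star$, and the two classifiers induce the same partition of $\Gamma\times\X$ up to the boundary-set equivalence class, i.e.\ $\hat C=\hat C^\star$. The third part is then immediate: having identified $\hat C$ with $\hat C^\star$, Construction~\ref{constr:genmulti} gives $Z(r;\chi,\hat C)=1-u(r)$, and the second and third lines of Eq.~\eqref{eq:MultiZacc} express this quantity through the $R_{i,j}$ and hence through $\q_{i,j}=\hat\q_{i,j}$.

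The hard part will be the interior step of the second assertion. The definition of a monotone classifier (Def.~\ref{def:monotoneclassifier}) only constrains the behavior of $\hat C$ along the edges of $\X$, so passing from agreement of the pairwise restrictions to agreement of the full multiclass maps on the interior is not automatic; one must either lift the monotone-class/set-inclusion structure of Lemma~\ref{lem:setinclusion} to each pair and patch the resulting domains via the pairwise characterization in Eq.~\eqref{eq:multibayes}, or observe that any classifier whose edge data are consistent with the induced density ratios is forced to coincide with the Bayes $\argmax$ off a null set. Extra care is needed there precisely because members of the equivalence class may differ on boundary sets of measure zero, and because the argument must also handle the classes in $V(r)$ — where the density ratios are indeterminate — without spuriously assigning $r$ to a class outside the support of the corresponding density, exactly as in Construction~\ref{constr:genmulti}.
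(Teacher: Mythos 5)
Your first part is essentially identical to the paper's: fix $r$, invoke Lemma~\ref{lem:ridentity} to write $\hat R_{j,k}(r)=\hat P_k(r)/\hat P_j(r)$ with $\hat P_m(r)=0$ for $m\in V(r)$, substitute into the sum, and observe that the terms indexed by $V(r)$ vanish so that the sum telescopes to $1$. No issues there. For the second part your route differs in presentation: you reconstruct $\hat C^\star$ from its pairwise restrictions via Prop.~\ref{prop:bfr} and Construction~\ref{constr:genmulti} and match prevalence functions edge by edge, whereas the paper instead builds the candidate supports $\hat\Gamma_i$ directly from the sets $\hat\Phi_{i,j}$ and $\hat\Psi_{i,j}$ determined by $W(r)$ and $V(r)$, takes \emph{any} Bayes classifier with those supports and density ratios, and asserts the identification in one line. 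Your version makes explicit the mechanism (class switching recovers $\q_{j,k}$, which pins down the pairwise partitions off boundary sets) that the paper leaves implicit, which is a genuine improvement in transparency.

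The ``hard part'' you flag --- that Def.~\ref{def:monotoneclassifier} and the induced density ratios only constrain $\hat C$ along the edges of $\X$, so agreement of the pairwise restrictions does not by itself force agreement on the interior of the simplex --- is a real observation, but you should know that the paper's own proof does not close it either; it simply declares ``Hence $\hat C$ is the Bayes classifier'' after matching $W(r)$, $V(r)$, and the density ratios. The implicit resolution is that the paper identifies a monotone classifier with the equivalence class of partitions determined by its induced prevalence functions (see the boldface remark following Def.~\ref{def:bayesoptimal} and the reconstruction in Construction~\ref{constr:genmulti}), under which reading the edge data \emph{is} the classifier and there is nothing left to prove. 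As a statement about an arbitrary function $\hat C:\Gamma\times\X\to\K$ your unease is justified, but you should not treat it as a defect of your argument alone: either adopt the paper's identification explicitly as a hypothesis, or note, as you began to, that the interior values are recovered by applying Eq.~\eqref{eq:extargmaxclass} to the induced prevalence functions. Either way your proof is complete to the same standard as the paper's.
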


\begin{proof}
Consider the first claim.  Let $\chi$ be any test-prevalence in the interior of $\X$.  By the definition of $\hat R_{j,k}(r)$ and Lemma \ref{lem:ridentity}, we may decompose the $\hat R_{i,j}(r)$ into ratios of functions $\hat P_j(r)$, and in particular, we may assign $\hat P_m(r)=0$ for $m\in V(r)$.  We conclude that 
\begin{align}
\sum_{j=1}^K  \frac{\chi_j}{\sum_{k=1}^K\chi_k \hat \q_{j,k}(r)/[1-\hat \q_{j,k}(r)]} &= \sum_{j\in W(r)}  \frac{\chi_j}{\sum_{k\in W(r)}\chi_k \hat \q_{j,k}(r)/[1-\hat \q_{j,k}(r)]} \nonumber \\
 &= \sum_{j\in W(r)}  \frac{\chi_j}{\sum_{k\in W(r)}\chi_k \hat R_{j,k}(r)} \nonumber \\
 &=  \sum_{j\in W(r)}  \frac{\hat P_j(r)\chi_j}{\sum_{k\in W(r)}\chi_k \hat P_k(r)} = 1.  \label{eq:reverseq}
\end{align}

Next, let $\hat \Phi_{i,j}=\{r:i,j\in W(r)\}$, and let $\hat \Psi_{i,j}=\{r:i\in W(r), j\in V(r), \hat R_{i,j}(r)=0 \}$, from which we construct the set
\begin{align}
\hat \Gamma_i = \bigcup_{j, j\ne i}\left[ \hat \Phi_{i,j}\cup \hat \Psi_{i,j}\right].
\end{align}
Clearly $\hat \Gamma_i$ is the set of points for which we assign $\hat P_i(r) > 0$, and $\hat \Gamma_{i,j} = \hat \Gamma_i \cup \hat \Gamma_j$ is the set of points for which $\hat P_i(r) > 0$ and/or $\hat P_j(r) > 0$.  Moreover, the set $\hat \Gamma = \bigcup_i \hat \Gamma_i$ is the set of points $r$ on which at least one $\hat P_j(r) > 0$.  

Finally, let $\hat C^\star:\hat \Gamma \times \X$ be any Bayes classifier constructed from densities $P_j(r)$ whose supports are $\hat \Gamma_j$ and having density ratios $R_{i,j}(r) = \hat R_{i,j}(r)$, where $R_{i,j}: \hat \Gamma_{i,j} \to [0,\infty]$.  Clearly this classifier is self-consistent and has the same sets $W(r)$ and $V(r)$ as $\hat C$.  Hence $\hat C$ is the Bayes classifier.  \qed
\end{proof}

A fundamental limitation of Prop.\ \ref{prop:reverse} is the fact that it assumes the existence of a Bayes classifier whose densities have the specified supports.  While this is not likely to be an issue in practice, it is deeply unsatisfying from a theoretical standpoint.  The problem arises from the fact that Lemma \ref{lem:ridentity} simply states that the $\hat P_j(r)$ exist and are not unique, but it does not tell us how to construct such functions.  By itself, the non-uniqueness is not inherently surprising: a single Bayes classifiers can be associated with different sets of density functions having the same set of density ratios.  But without a prescription for constructing the $\hat P_j(r)$, Prop. \ref{prop:reverse} does not guarantee that the induced densities are measurable or normalizable.  The former issue likely involves a complicated foray into measure theory, which is beyond the scope of this work.  However, it is straightforward to show that self-consistency \textit{does not} imply normalization, an issue that we pursue in the next section.

\subsubsection{Normalization for Classifiers}

Let $\Gamma_i$ be the support of $P_i(r)$.  Definition \ref{def:densityratio} implies the identity
\begin{align}
\int_{\Gamma_i}\d r \,\,  R_{i,j}(r) P_i(r) + \int_{\Gamma_j / \Gamma_i} \!\!\!\d r \,\, P_j(r) = \int_{\Gamma_j} \d r \,\, P_j(r)= 1.\label{eq:normalizationidentity}
\end{align}
A key implication of Eq.\ \eqref{eq:normalizationidentity} is that once the $R_{i,j}$ are known, the densities $P_j$ can no longer be treated as independent.  But the fact that the $P_j(r)$ are normalized \textit{also} imposes structure on the $R_{i,j}$.  For example, it is not possible that $R_{i,j}(r)>1$ for all $r\in \Gamma_i$, since this implies that $P_j(r)$ is not normalized.  

Unfortunately, we are unaware of a simple and explicit condition on $\hat R_{i,j}$ ensuring that the $P_j(r)$ are normalized.  Clearly the $(\hat R_{i,j} - 1)$ must be negative on a set of finite measure with respect to $P_i(r)$.  Moreover, if $\Gamma_i= \Gamma_j=\Gamma_k$, it is not possible for $R_{i,j} < R_{i,k}$ for all values of $r$, since this would imply that at least one of the densities $P_i$, $P_j$, or $P_k$ is not normalized.  Whether these constitute \textit{sufficient} conditions is unknown to us except in special cases.  Thus, we simply state the normalization condition as follows.
\begin{definition}
Let $\hat C$ be a  self-consistent classifier.  Let $\hat R_{i,j}$ be the induced density ratios expressed in terms of measurable functions $\hat P_j(r)$ having supports $\hat \Gamma_i$.  Then we say that the classifier is \textbf{normalizable} if Eq.\ \eqref{eq:normalizationidentity} is satisfied under the substitution $R_{i,j} \to \hat R_{i,j}$, $P_j \to \hat P_j$, and $\Gamma_j \to \hat \Gamma_j$.  
\end{definition}

\subsubsection{Binary Classifiers Revisited}

While self-consistency and normalizability are necessary and sufficient conditions guaranteeing that a classifier induces meaningful density functions $P_j(r)$, the latter criterion assumes measurability of the $\hat P_j(r)$.  We are unaware of general conditions guaranteeing this except in the case of binary classifiers.  
\begin{proposition}\label{prop:binary_existence}
Let $\hat C:\Gamma \times \XT \to \{1,2\}$ be a self-consistent binary classifier, where $\Gamma \subset \mathbb R^n$ is a compact set for $n \ge 1$.  If the induced density ratio $\hat R_{1,2}$ is positive, bounded, and has the properties that $ 0 < \hat R_{1,2} < 1$ on a set of finite Lebesgue measure, and moreover, $ 1 < \hat R_{1,2} < M < \infty$ on a set of finite Lebesgue measure, then there exist density functions $P_1(r)$ and $P_2(r)$ for which $\hat C$ is the Bayes classifier.
\end{proposition}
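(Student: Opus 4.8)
The plan is to realize two densities directly from the induced density ratio $R\equiv\hat R_{1,2}$ and then verify that $\hat C$ falls into the resulting Bayes equivalence class. The key observation is that \emph{any} pair of nonnegative functions with $P_2=R\,P_1$ automatically has density ratio $R$ wherever $P_1>0$, so the construction reduces to producing a single probability density $P_1$ on $\Gamma$ subject to the one extra linear constraint $\int_\Gamma R(r)\,P_1(r)\,\d r=1$; once this holds, $P_2:=R\,P_1$ is nonnegative (since $R>0$), integrable (here boundedness of $R$ and compactness of $\Gamma$ are used), and integrates to $1$, hence a bona fide density. Since $K=2$ and $R$ is positive and bounded, self-consistency is automatic: one has $\hat\q_{2,1}=1-\hat\q_{1,2}$ directly from Def.\ \ref{def:inducedR}, so $\hat R_{2,1}=\hat R_{1,2}^{-1}$, and this hypothesis does no real work here. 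The substantive content lies in the two measure conditions, which are precisely what make the normalization constraint solvable.

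To build $P_1$, I would take the two sets supplied by the hypothesis, $A=\{r:0<R(r)<1\}$ and $B=\{r:1<R(r)<M\}$ (or positive-measure subsets thereof); both are measurable with positive, finite Lebesgue measure — finiteness is automatic on compact $\Gamma$, positivity is the real assumption — and they are disjoint. Set $P_1^{(t)}=\tfrac{t}{|A|}\mathbb I_A+\tfrac{1-t}{|B|}\mathbb I_B$ for $t\in[0,1]$; this is a nonnegative measurable function integrating to $1$. Then $g(t):=\int_\Gamma R\,P_1^{(t)}\,\d r=t\,a+(1-t)\,b$, with $a=|A|^{-1}\!\int_A R\in(0,1)$ and $b=|B|^{-1}\!\int_B R\in(1,M)$, the strict bounds following because $R$ is strictly below $1$ on $A$ and strictly above $1$ on $B$, each of positive measure. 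As $g$ is continuous (affine) with $g(1)=a<1<b=g(0)$, the intermediate value theorem furnishes $t^\star\in(0,1)$ with $g(t^\star)=1$. Put $P_1:=P_1^{(t^\star)}$ and $P_2:=R\,P_1$; these are probability densities with common support $\hat\Gamma_{1,2}=A\cup B$ and $P_2(r)/P_1(r)=R(r)$ there.

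It then remains to verify that $\hat C$ is a representative of the Bayes classifier for $P_1,P_2$ in the sense of Def.\ \ref{def:bayesoptimal}. For $r\in\hat\Gamma_{1,2}$, the strict inequality $q_1P_1(r)>q_2P_2(r)$ is equivalent to $q_1>\hat\q_{1,2}(r)$ (using $\hat\q_{1,2}=R/(1+R)$ and $q_1+q_2=1$), while monotonicity of $\hat C$ (Lemma \ref{lem:setinclusion}) together with the definition of the induced prevalence function (Def.\ \ref{def:inducedR}) says exactly that $\hat C(r,q)=1$ for $q_1>\hat\q_{1,2}(r)$ and $\hat C(r,q)=2$ for $q_1<\hat\q_{1,2}(r)$; thus $\hat C$ agrees with $D_1^\star(q),D_2^\star(q)$ away from the boundary set $B^\star(q)$. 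On $\{r\in\hat\Gamma_{1,2}:\hat\q_{1,2}(r)=q_1\}$ (which is $B^\star(q)\cap\hat\Gamma_{1,2}$) and on $\Gamma\setminus\hat\Gamma_{1,2}$ (where $P_1=P_2=0$, so the whole complement lies in $B^\star(q)$ for every $q$), Def.\ \ref{def:bayesoptimal} leaves the assignment free, and the values prescribed by $\hat C$ constitute a legitimate choice of $B_1^\star(q),B_2^\star(q)$. Hence $\hat C$ coincides with a Bayes classifier for $P_1,P_2$ on all of $\Gamma\times\XT$, as required.

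I expect the only genuinely delicate point to be this last identification: $\hat C$ is defined and monotone on all of $\Gamma$, whereas the constructed densities vanish off $A\cup B$, so the Bayes classifier is truly indeterminate on that complement. Making the statement ``$\hat C$ is the Bayes classifier'' rigorous there requires the equivalence-class formulation of Def.\ \ref{def:bayesoptimal} — with $B^\star(q)$ partitioned arbitrarily for each $q$ — to absorb the behavior of $\hat C$. Everything else, namely measurability and integrability of $P_1,P_2$ and the intermediate value argument, is routine.
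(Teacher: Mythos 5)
Your proposal is correct and follows essentially the same route as the paper: a piecewise-constant $P_1$ supported on the regions where $\hat R_{1,2}$ lies below and above $1$, with the normalization of $P_2=\hat R_{1,2}P_1$ enforced by a linear condition that is solvable precisely because both regions have positive measure (the paper uses three constants and a small linear system, also assigning positive density on $\{\hat R_{1,2}=1\}$, while you use a one-parameter family and the intermediate value theorem and set the density to zero there). Your explicit verification that $\hat C$ lands in the Bayes equivalence class of Def.~\ref{def:bayesoptimal} — including the handling of the boundary set and of $\Gamma\setminus(A\cup B)$ — is more careful than the paper's one-line conclusion and is a welcome addition rather than a divergence.
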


\begin{proof}
We proceed by construction.  Let $\Lambda_0=\{r: \hat R_{1,2}(r) = 1\}$, $\Lambda_1 = \{r: \hat R_{1,2}(r) < 1\}$, and $\Lambda_2 = \{r: \hat R_{1,2}(r) > 1\}$.  Let $\hat P_1(r) = \alpha_m$ for $r\in \Lambda_m$, where $\alpha_m$ are constants and $m=0,1,2$.  Normalization of $\hat P_2(r)$ implies the identity
\begin{align}
\alpha_1 \int_{\Lambda_1} \d r \,\, (\hat R_{1,2}(r) - 1) + \alpha_2 \int_{\Lambda_2} \d r \,\, (\hat R_{1,2}(r)-1) = 0. \label{eq:linsys}
\end{align}
By compactness, both integrals are finite, so that Eq.\ \eqref{eq:linsys} is valid linear equation, and it also implies that $\alpha_1$ and $\alpha_2$ have the same sign.  Moreover, normalization of $\hat P_1$ implies 
\begin{align}
\sum_{m=0}^2 \alpha_m \mu(\Lambda_m) = 1
\end{align}
where $\mu(\Lambda_m)$ is the Lebesgue measure of the $\Lambda_m$.  This leads to a system of equations for $\alpha_m$, which yields a normalized and strictly positive $\hat P_1(r)$.  Moreover, $\hat P_2(r) = R_{1,2}(r) \hat P_1(r)$ is normalized as a result of Eq.\ \eqref{eq:linsys}.  Thus, there exist density functions for which $\hat C:\Gamma \times \XT \to \{1,2\}$ is the Bayes classifier for all test prevalence values $\chi \in \XT$.  
\end{proof}

The challenge in extending Prop.\ \ref{prop:binary_existence} to the multiclass case arises from the fact that the density $\hat P_1(r)$ must jointly satisfy the normalization condition for all $j > 1$.  One could in principle consider a compact domain $\Gamma$ as before and subdivide it into increasingly small regions, treating $\hat P_1$ as constant on each of these.  This process yields a system of equations that should have a solution, given sufficient granularity of the domain.  However, we are unaware of a method for demonstrating that the resulting density function is non-negative, although it can likely be normalized.  Thus, it seems that the joint requirements of positivity and normalizability renders the multiclass problem more difficult.

\section{Practical Issues Through the Lens of Examples}
\label{sec:examps}

The analysis of the previous sections assumes that it is possible to evaluate classifiers across an arbitrarily large number of affine prevalence values.  In practice, however, one is restricted to a finite number of such computations, as well as limited sampling of training and test data.  This has several implications for real-world testing of monotonicity and self-consistency.  The purpose of this section is to explore such issues while illustrating main concepts in the context of examples.

\subsubsection{Examples of Induced Density Ratios}

We begin with a series of three- and four-class examples that illustrate certain properties of Lemma \ref{lem:ridentity} and Proposition \ref{prop:reverse}.  The goal of these examples is to address the practical issue of identifying the sets $W$ and $V$, especially in a multiclass setting with large $K$.  Here we assume that the $\hat R_{i,j}$ are both known exactly and self-consistent, focusing on the fact that these must be evaluated pointwise for each $r$ in an empirical test population.  In this context, it is convenient to define an \textbf{induced density matrix} $\hat \bR$ whose $(i,j)$th element is $\hat R_{i,j}$. We temporarily omit any dependence on the variable $r$, since this can be treated as fixed.  

Consider first $\K = \{1,2,3\}$.  If $\hat P_j > 0$ for all $j$, then clearly $\hat R_{i,j}$ satisfy Eq.\ \eqref{eq:Ridentity}.  A more interesting case occurs when only $\hat P_1=0$, which yields
\begin{align}
\hat \bR = \begin{bmatrix}
1 & \infty & \infty\\
 0 & 1 & a\\ 
 0& 1/a & 1
\end{bmatrix}
\end{align}
for some $a> 0$.  Another interesting case occurs when only $\hat P_3 > 0$, so that one finds
\begin{align}
\hat \bR = \begin{bmatrix}
1 & ? & \infty\\
 ? & 1 & \infty\\ 
 0& 0 & 1
\end{bmatrix}
\end{align}
where $?$ stands for an indeterminate form whose value we cannot anticipate.  

We may continue in this way for a four-class setting.  From left to right, the following equations illustrate the cases $\hat P_1=0$, $\hat P_1=\hat P_2=0$, and $\hat P_1=\hat P_2=\hat P_3=0$, with all remaining densities positive.  One finds
\begin{align}
\hat \bR = \begin{bmatrix}
1 & \infty & \infty & \infty \\
 0 & 1 & a & b\\ 
 0& 1/a & 1 & c \\
 0 & 1/b & 1/c & 1
\end{bmatrix},
&&\hat \bR = \begin{bmatrix}
1 & ? & \infty & \infty \\
 ? & 1 & \infty & \infty \\ 
 0& 0 & 1 & c \\
 0 & 0 & 1/c & 1
\end{bmatrix},
&&\hat \bR = \begin{bmatrix}
1 & ? & ? & \infty \\
 ? & 1 & ? & \infty\\ 
 ?& ? & 1 & \infty \\
 0 & 0 & 0 & 1
\end{bmatrix},
\end{align}
for some positive numbers $a$, $b$, and $c$.  

These examples illustrate that  $\hat \bR$ can be put into a standard form as follows.
\begin{definition}\label{def:standard}
Let $\hat \bR$ be the induced density ratio matrix.  We say that $\hat \bR$ is in \textbf{standard form} if it is in a block form
\begin{align}
\hat \bR = \begin{bmatrix}
{\boldsymbol {\rm A}} & \boldsymbol \infty \\
{\boldsymbol 0} & {\boldsymbol {\rm B}}
\end{bmatrix},
\end{align}
where the elements of the square matrix ${\boldsymbol {\rm B}}$ satisfy Eq.\ \eqref{eq:Ridentity}, $\boldsymbol \infty$ is a matrix of whose elements are all $\infty$, and ${\boldsymbol 0}$ is a matrix whose elements are all zero.  
\end{definition}

It trivial to show that $W$ is the set of all indices belonging to any ordered pair of ${\boldsymbol {\rm B}}$, whereas the elements of $V$ are those indices belonging to any ordered pair of ${\boldsymbol {\rm A}}$.  Thus, the density ratios are self-consistent if indices can be relabeled such that the matrix $\hat \bR$ is in standard form, which is a convenient way of checking for this property.  We also speculate that one can inductively relabel indices of an arbitrary $\hat \bR$ to put in into standard form, although this task is left for future work.  In the examples of the next sections, the standard forms can be constructed by recourse to the above examples.  


\subsection{Illustration of Construction \ref{constr:multi}: Gaussian Distributions}

We next demonstrate some basic properties of class switching and Construction \ref{constr:multi}.  Consider a three-class problem for which the conditional PDFs are given by
\begin{align}
P_j(x,y)&=\frac{1}{2\pi|\Sigma_j|^{1/2}}\exp\left[-\frac{1}{2} (x-\mu_{j,x},y-\mu_{j,y}) \Sigma_j^{-1} \binom{x-\mu_{j,x}}{y-\mu_{j,y}} \right], \label{eq:gausspdfs}
\end{align}
where $\Sigma_j$ is a $2 \times 2$ covariance matrix, and $\mu_{j,x}$ and $\mu_{j,y}$ are fixed parameters.  For $j\in\{1,2,3\}$, these are chosen to be
\begin{subequations}
\begin{align}
\Sigma_1 &= \begin{pmatrix}
0.49 & 0\\
0 & 0.09 
\end{pmatrix}, &
\Sigma_2 &= \begin{pmatrix}
0.16 & 0\\
0 & 0.64 
\end{pmatrix}, & 
\Sigma_3 &= \begin{pmatrix}
0.25 & 0\\
0 & 0.01 
\end{pmatrix},\label{eq:covariance}
\end{align}
and
\begin{align}
\mu_{1,x} &=0 & \mu_{2,x} &=0 & \mu_{3,x} &=1 , \label{eq:means1} \\
\mu_{1,y} &=1 & \mu_{2,y} &=-1 & \mu_{3,y} &=0. \label{eq:means2}
\end{align}
\end{subequations}

Figure \ref{fig:ternary} shows a finite number of pairwise boundary sets and classifiers, along with the corresponding three-class Bayes classifier for the test prevalence of $\chi=(1/3,1/3,1/3)$.  For the pairwise classifiers (top two plots and bottom left plot), the fixed point denoted by an $x$ clearly changes class at a single value of the affine prevalence.  Observe also that according to Construction \ref{constr:multi}, it is possible to express the three-class Bayes classifiers in terms of its pairwise counterparts.  This means that the boundary separating classes in the latter should be a composite of the pairwise boundary sets, which is shown in the bottom-right plot of Fig.\ \ref{fig:ternary}.  Because the densities are given explicitly, this example directly verifies Construction \ref{constr:multi}.

\begin{figure}
\includegraphics[width=14cm]{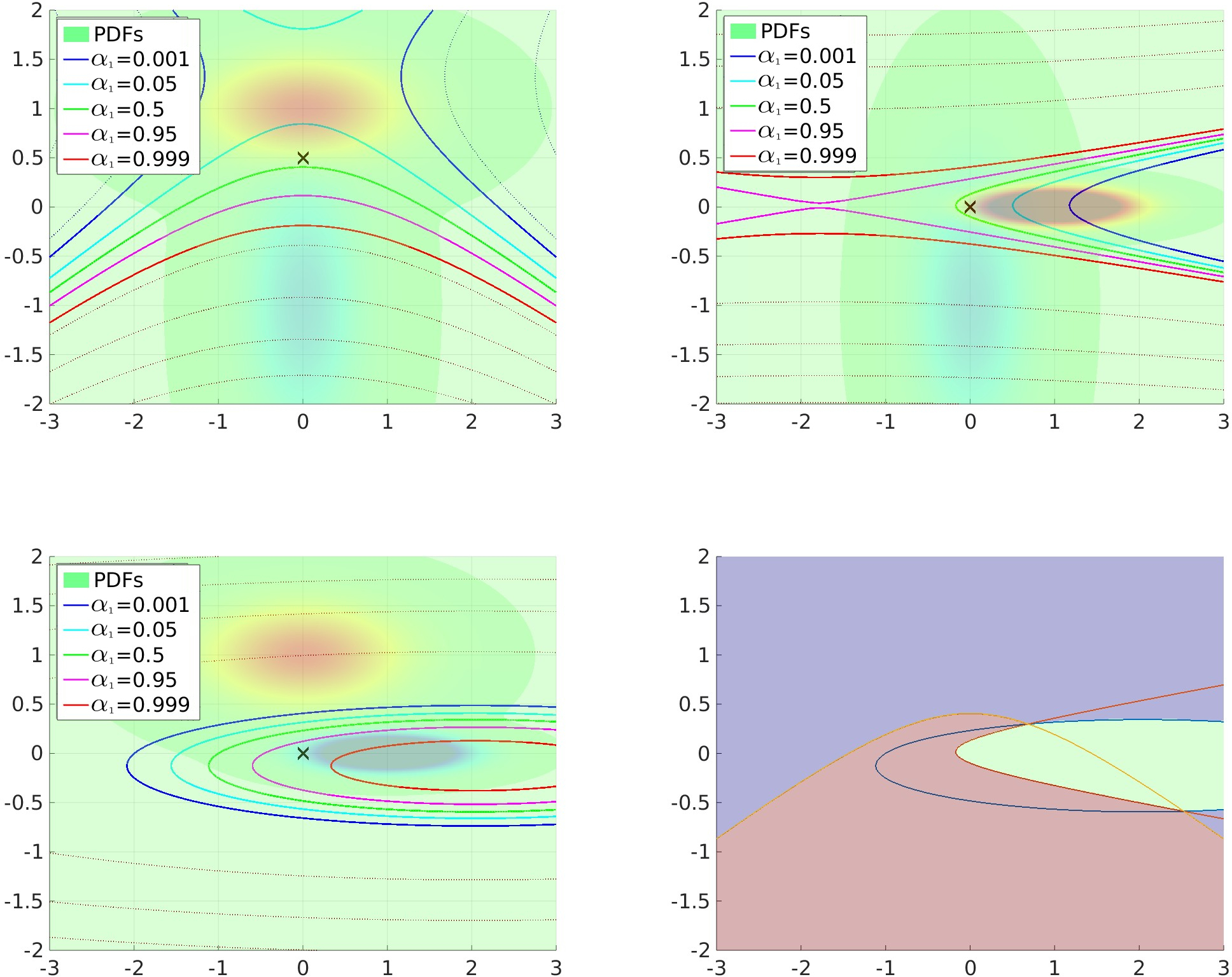}\caption{Pairwise monotone classifiers for the probability distributions given by Eqs.\ \eqref{eq:gausspdfs}--\eqref{eq:means2}.  The top-left, top-right, and bottom-left plots show pairwise classification boundaries associated with $\q_{1,2}(r)$, $\q_{3,2}(r)$, and $\q_{1,3}(r)$, respectively.  The solid lines correspond to level-sets as indicated in the legend, whereas the dotted contours show additional (unlabeled) contours.  The bottom-right plot shows the Bayes optimal classifier for $q=(1/3,1/3,1/3)$.  Note that these domains are bounded by pairwise relative probabiliy level-sets, as predicted by the theory of Sec.\ \ref{sec:level-set}.}\label{fig:ternary}
\end{figure}

\subsection{Character Recognition and Self-Consistency}

Next we consider a canonical image classification problem using the MNIST character recognition dataset \cite{MNIST}.  This example illustrates the  challenges of using Construction \ref{constr:genmulti} when we do not know \textit{a priori} that $\hat C$ is Bayes optimal and can only test for self-consistency on a  discrete grid of affine prevalence values.  

\subsubsection{MNIST Dataset and Neural Network Architecture}
\label{subsub:MNIST}

The full dataset is comprised of 60,000 training images of handwritten numerals `0' to `9', along with 10,000 validation images.  (We use single-quotes to distinguish handwritten numerals from numbers.)  Here we consider the restricted three-class problem of identifying the numerals `1', `2', and `7', which can be difficult to distinguish visually.  We let $\omega$ denote a handwritten numeral, $r(\omega)$ the corresponding grid of $28 \times 28$ pixel values associated with its image, and $C(\omega)=k\in \{1,2,3\}$ the true class of the image under the mapping $1 \mapsto$ `1',  $2 \mapsto$ `2', and $3 \mapsto$ `7'.  Properties of the training and test data are listed in Table \ref{tab:dataprops}.  Note that $s_k$ denotes the number of samples in the $k$th class.  
{\rowcolors{2}{blue!10}{blue!20}
\begin{table}[h!]
\centering
\begin{tabular}{|c | c | c | c| c | c |} 
 \hline
 {\bf Numeral} & $C(\omega)=k$ & {\bf Training} $s_k$ & {\bf Training} $\chi_k$ & {\bf Test} $s_k$ & {\bf Test} $\chi_k$ \\ [0.5ex] 
 \hline
 `1' & 1 & 6742 & 0.3555 & 1135 & 0.3552 \\ 
 `2' & 2 & 5958 & 0.3142 & 1032 & 0.3230 \\
 `7' & 3 & 6265 & 0.3303 & 1028 & 0.3218 \\ [1ex] 
 \hline
\end{tabular}
\caption{Properties of the MNIST data used in the handwriting-recognition example.  The second column $C(\omega)$ denotes the mathematically assigned label associated with each numeral.  The notation $s_k$ stands for the number of samples in the $k$th class, whereas $\chi_k$ denotes the prevalence of the training or test population.  }
\label{tab:dataprops}
\end{table}

\begin{figure}\begin{center}
\includegraphics[width=12cm]{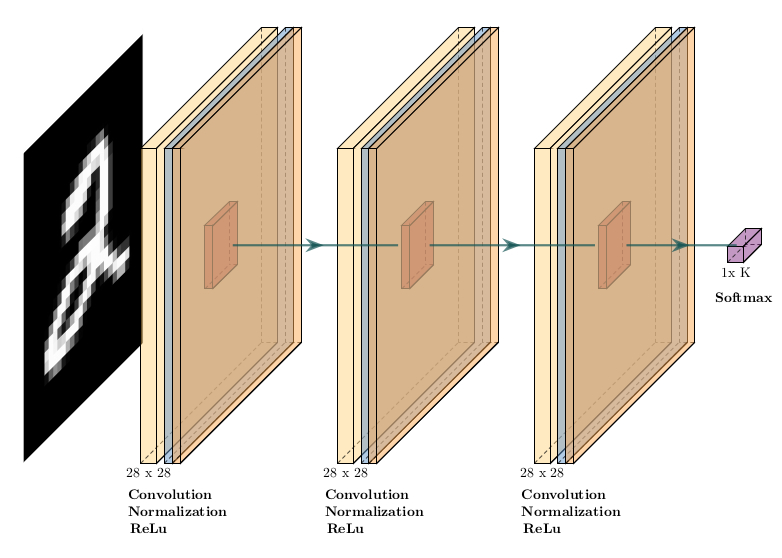}\caption{Schematic of the neural network architectures used in this work.  The input is a $28\times 28$ pixel image with 256 grayscale levels.  This feeds into a convolutional layer with a convolution width of $5$ pixels and zero-padding, which ensures that the output (large yellow rectangle with smaller orange ractangle) is the same size as the input.  Next we apply a normalization layer (blue), followed by a rectified linear unit (ReLu) layer (light orange).  The output of this is fed to another set of layers with an identical structure, followed by another.  The end-result is fed into a softmax layer.  Reference \cite{NNPlotter} was used to generate this figure. }\label{fig:NN}
\end{center}
\end{figure}

To perform character recognition, construct a neural network with the architecture illustrated in Fig.\ \ref{fig:NN}.  We choose $\hat C(r,q)$ to be
\begin{align}
\hat C(r,q) = \argmax_j \left[\mathcal S(r,q) \right],
\end{align} 
where $\mathcal S:\Gamma \times \X \to [0,1]^K$ denotes the softmax output function of the NN, and $\mathcal S_m$ denotes the $m$th element of $\mathcal S$.  Given finite training data $\Pi_{tr}$, our goal is to minimize the empirical objective (or loss) function
\begin{align}
\mathcal L\left[\hat C(r);\Pi_{tr}, q\right] = \sum_{k=1}^K\frac{q_k}{s_k} \sum_{j=1}^{s_k} \mathbb I \left[C(\omega_{j,k}) \ne \hat C\left(r(\omega_{j,k})\right)\right],  \label{eq:emploss}
\end{align}
where $\omega_{j,k}$ is the $j$th element from the $k$th class and $q$ is temporarily treated as a fixed parameter.  While it could in principle be natural to consider the cross-entropy loss and directly quantify the $\Pr[C|r]$, Eq.\ \eqref{eq:emploss} allows us to: (i) directly approximate the (affine) prevalence-weighted expected loss; and (ii) thereby approximate the deterministic Bayes classifier $\hat C^\star(r,q)$.  Moreover, the failure of $H$-consistency in Ref.\ \cite{Amudhan} motivates us to consider Eq.\ \eqref{eq:emploss} and regularization thereof.  

A key problem with Eq.\ \eqref{eq:emploss} is that it is not differentiable.  As a result, backpropagation is not possible \cite{DNN}, so that optimization methods such as stochastic gradient descent cannot be used \cite{DNN}.  One solution to this is to consider a homotopy-type approach wherein we consider a sequence of objective functions that converge Eq.\ \eqref{eq:emploss} as a scaling parameter $\sigma\to 0$.  See Refs.\ \cite{Homotopy1,Homotopy2,Homotopy3,Homotopy4,Homotopy5} for general background on homotopy approaches, as well as \cite{PartI,Amudhan,Regularized} for uses and variations of this method in the context of ML.  Here we consider the homotopy objective function as a composition of several functions:
\begin{subequations}
\begin{align}
y_{j,k,m}&=\mathcal S_m\left(r(\omega_{j,k})\right) \label{eq:softmax} \\
h(x;\sigma)&= (1/2)[1-\tanh(x/\sigma)],    \\
g(x;\sigma)&= (1/2)[\tanh\left((x-0.5)/\sigma \right) + 1],\\
\Delta y_{j,k,m} &= y_{j,k,k} - y_{j,k,m}, \\  
\mathcal L\left[\mathcal S;\Pi_{tr},q,\sigma \right] &=    \sum_{k=1}^K \frac{q_k}{s_k} \sum_{j=1}^{s_k}  g\left[  \sum_{m=1}^K g\left( h(\Delta y_{j,k,m};\sigma) ;\sigma \right) ;\sigma \right].              \label{eq:homotopyobjective}                  
\end{align}
\end{subequations}
It is straightforward to show that as $\sigma \to 0$, Eq.\ \eqref{eq:homotopyobjective} converges to Eq.\ \eqref{eq:emploss}.  

For each pair of numerals, we trained the pairwise classifiers according to the pseudocode in Algorithm \ref{alg:nntrain}, where $\mathcal S^{\{j,3\}}$ is the trained neural network corresponding to the $j$th prevalence $\bq_j$.  In each optimization step, we used stochastic gradient descent with a learning rate of 0.01 and 15 epochs (no mini-batching).  In the pseudocode below, the notation $\mathcal E_\chi$ denotes the cross-entropy, which is used only to initialize the NN, and the notation $\argmin_{\mathcal S} \left[ L | \mathcal X \right ]$ means optimization of $L$ over $\mathcal S$ starting at point $\mathcal X$.  Given this family of NNs, the relative probabilities of belonging to any given class can be estimated using Eq.\ \eqref{eq:MultiZacc} and the pairwise training prevalences.  Note, however, that the prevalence was only sampled on a grid with a spacing of $\Delta q_1=0.01$, so that we can only provide lower and upper bounds on the classification accuracy.  This corresponds to $\epsilon_{\rm num}$ as discussed in Sec.\ \ref{subsec:imp4UQ}.  

\begin{algorithm}
\caption{Neural-Network Training Algorithm}\label{alg:nntrain}
\begin{algorithmic}[1]
\Procedure{Pairwise Neural Network Training}{}
\State $\bsigma \gets \{1,2,4\}$
\State $\bq \gets \{0.01,0.02,...,0.99\}$
\State $L \gets \mathcal L\left[\mathcal S; \Pi_{tr},(\bq_1,1-\bq_1),\bsigma_1 \right] + \mathcal E_\chi$
\State $\mathcal S^{\{0\}} \gets $ Initialize with random weights
\State $\mathcal S^{\{1,1\}} \gets \argmin_{\mathcal S} \left[ L | \mathcal S^{\{0\}} \right]$
\State \emph{loop}:
\For {j=1:99}
	\For {k=2:3}
		\State $\mathcal S^{\{j,k\}} = \argmin_{\mathcal S}\left[ \mathcal L\left[\mathcal S; \Pi_{tr},(\bq_j,1-\bq_j),\bsigma_k \right] | \mathcal S^{\{j,k-1\}} \right]$
	\EndFor
	\State $\mathcal S^{\{j+1,1\}} \gets \mathcal S^{\{j,3\}}$
\EndFor
\EndProcedure
\end{algorithmic}
\end{algorithm}

\subsubsection{Self-Consistency}
\label{subsub:consistency}

Because the affine prevalence is only sampled on a finite grid, we can at best conclude that the induced prevalence function $\hat \q_{i,j}(r)$ [or equivalently, $\hat R_{i,j}(r)$] for a fixed $r$ is bounded from below and above.  Moreover, our pairwise classifiers occasionally violate monotonicity, switching classes multiple times; see also Ref.\ \cite{Langford05}.  Thus, we are limited to the knowledge that
\begin{align}
\hat R_{i,j}(r) \in [\hat R_{i,j}^\ell(r),\hat R_{i,j}^h(r)] \label{eq:binninguncertainty}
\end{align}  
for some low and high bounds $\hat R_{i,j}^\ell(r)$ and $\hat R_{i,j}^h(r)$ depending on $r$ and the density with which the affine prevalences are sampled.  For values of $r$ at which a pairwise classifier is non-monotone, we take the bounds to be the first and last value of the induced density ratio (as a function of the affine prevalence) at which the class switches.  

A key implication of Eq.\ \eqref{eq:binninguncertainty} is that we cannot in general assign a unique value of inherent uncertainty $u(r)$ to any given point.  This is a form of binning uncertainty discussed in Sec.\ \ref{subsec:imp4UQ}.  However, we can still estimate a range of uncertainties and test for the possibility of self-consistency.  We make this precise via the following definitions.

\begin{definition}[Empirically Self-Consistent Classifier]
Let $\hat C:\Gamma \times \X \to \K$ be a monotone classifier, and let the induced density ratios $\hat R_{i,j}(r)$ be determined to within some interval $\hat R_{i,j}(r) \in [\hat R_{i,j}^\ell(r),\hat R_{i,j}^h(r)]$.  We say that the classifier is \textbf{empirically self-consistent at a point} $r$ if there exist some values of $\tilde R_{i,j}\in [\hat R_{i,j}^\ell(r),\hat R_{i,j}^h(r)]$ that are self-consistent.  If there are no such values of $\tilde R_{i,j}$, we say that the classifier is \textbf{inconsistent at} $r$. \label{def:emp_consistency}
\end{definition}
\begin{definition}
Let $\hat C:\Gamma \times X$ be a monotone classifier.  Fix $r$, let $\hat R_{i,j}\in [\hat R_{i,j}^\ell(r),\hat R_{i,j}^h(r)]$ be the induced density ratios, and construct
\begin{align}
\Psi(r)=(\tilde R_{1,1}(r),\tilde R_{1,2}(r),...,\tilde R_{1,K},\tilde R_{2,3}(r),...,\tilde R_{K-1,K}(r))
\end{align}
as a collection of values $\tilde R_{i,j}(r)\in [\hat R_{i,j}^\ell(r),\tilde R_{i,j}^h(r)]$ that are self-consistent at point $r$.  We refer to the set $\mathbb F(r)=\{\Psi(r)\}$ of all such collections $\Psi(r)$ as the \textbf{feasible set of interpretations of the classifier} (or more simply, the feasible set) at point $r$.    
\end{definition}

When the induced density ratios are specified up to intervals of the form given by Eq.\ \eqref{eq:binninguncertainty}, the structure and existence of feasible sets can be understood by recourse to geometric arguments.  Figure \ref{fig:feasible} illustrates this idea.  It shows two intervals associated with $\hat R_{1,2}$ and $\hat R_{2,3}$ on perpendicular axes.  The shaded box corresponds to all pairs of induced density ratios for which products of the form $\hat R_{1,2}\hat R_{2,3}$ could be feasible.  For this to be the case, however, such products must equal $\hat R_{1,3}$.  Note also that values of constant $\hat R_{1,2}\hat R_{2,3}$ correspond to hyperbolas, which we can parameterize by $\hat R_{1,3}$.  Thus $\hat R_{1,3}$ on some interval can be interpreted as a domain in the figure bounded by two hyperbolas, and the feasible set only exists when this domain intersects the shaded box.

Given a feasible set $\mathbb F(r)$, the task of computing $u(r)$ is ultimately a modeling choice. In the examples that follow, we define $u(r)$ to be the arithmetic mean of the inherent uncertainties computed on the vertices of domains such as those in Fig.\ \ref{fig:feasible}.  A deeper analysis of such tasks is left as an open problem. 


In general, we are not aware of methods for guaranteeing that a real-world monotone classifier will be empirically self-consistent on all of $\Gamma$.  Thus, it seems reasonable to assume that in practice, there will exist $r$ such that $\mathbb F(r)=\emptyset$.  In fact, we propose this as an important metric for characterizing the quality of a ML algorithm: a better classifier is one with a higher-degree of empirical self-consistency as measured in terms of the number of points for which this condition holds, all else being equal.  


\begin{figure}
\begin{center}\includegraphics[width=12cm]{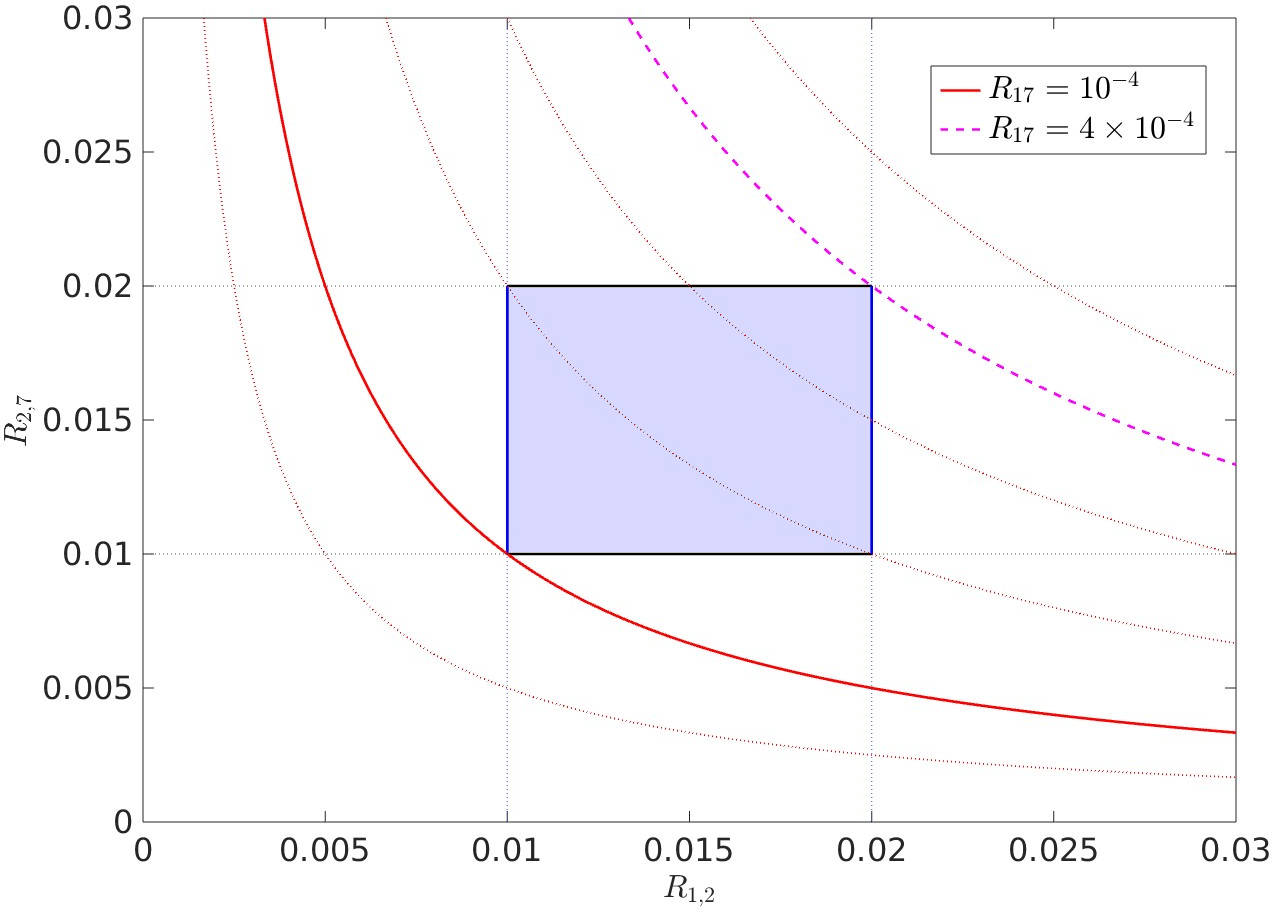}\end{center}\caption{Illustration of feasibility and empirical self-consistency in a three-class setting.  The horizontal and vertical axes correspond to values of $R_{1,2}$ and $R_{2,7}$, while the hyperbolas are curves of constant $R_{1,2}R_{2,7}$.  The shaded box shows an example of binning uncertainty in the values associated with $R_{1,2}$ and $R_{1,7}$.  For a fixed value of $r$, the triple $\Psi(r)=(R_{1,2},R_{2,7},R_{1,7})$ is only consistent if $R_{2,7}=R_{1,2}R_{2,7}$.    If a monotone classifier indicates that $R_{1,7}$ is bounded by some values $[R_{1,7}^\ell,R_{1,7}^h]$, then it is consistent if and only if the domain bounded by the corresponding hyperbolas intersects the box.  Moreover, that intersection is the feasible set.  In the examples that follow, we define $u(r)$ to be the arithmetic mean of the inherent uncertainties computed on the vertices of such domains. } \label{fig:feasible}
\end{figure}


\begin{figure}
\includegraphics[width=12cm]{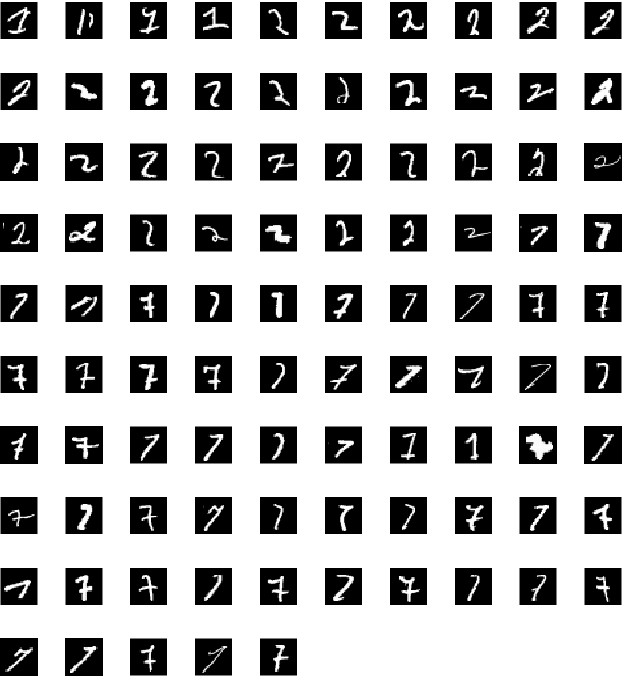}\caption{Digits `1,' `2,' and `7' from the MNIST training set that were determined to be inconsistent with the law of total probability according to Def.\ \ref{def:emp_consistency}.  }\label{fig:inconsistent}
\end{figure}
  
\begin{figure}
\includegraphics[width=12cm]{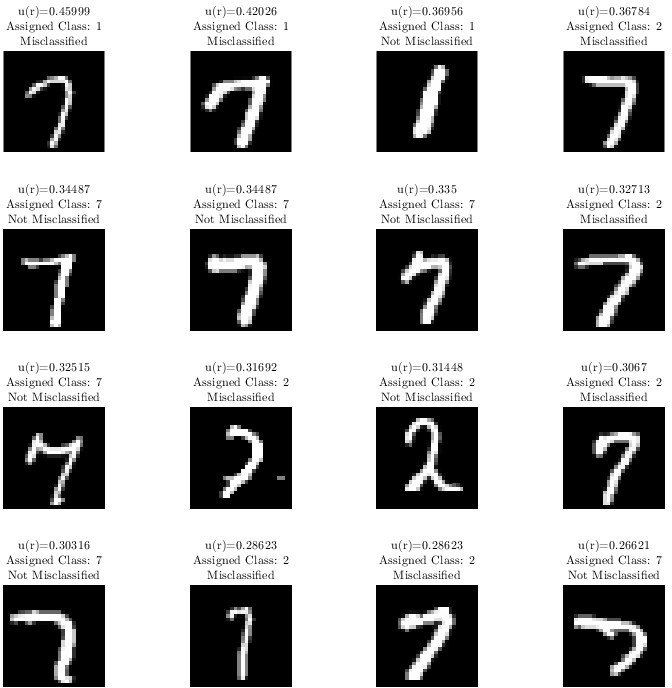}\caption{Digits `1,' `2,' and `7' from the MNIST training set that have a high uncertainty in the class labels. }\label{fig:lowprobs}
\end{figure}  
  
\begin{figure}
\includegraphics[width=12cm]{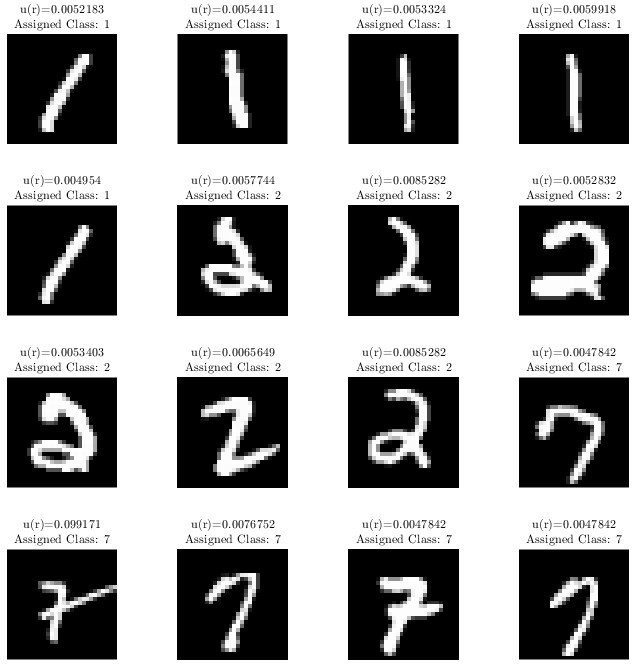}\caption{Digits `1,' `2,' and `7' from the MNIST training set with a high confidence of being classified correctly.}\label{fig:highprobs}
\end{figure}  
  
\subsubsection{Numerical Results}  
\label{subsub:results}  
  
Table \ref{tab:performance} summarizes the outcomes of our numerical experiments on the MNIST data.  After training the family of pairwise classifiers and computing bounds on the density ratios, our analysis rejected 95 training points as inconsistent with the law of total probability. Figure \ref{fig:inconsistent} shows all 95 of these images.  Such data are inputs for which we cannot meaningfully assign a class because the algorithm yields results that contradict the law of total probability; recall Eq.\ \eqref{eq:oneconsistency}.  It is comforting that many of these images bear no resemblance whatsoever to any of the numerals `1', `2', or `7'.  This suggests that the concept of self-consistency is in fact useful for identifying data that cannot be analyzed by the NN.  Figures \ref{fig:lowprobs} and \ref{fig:highprobs} show sample images from the training set that have low and high probabilities of being classified correctly.
  
Table \ref{tab:performance} also shows the fraction of points within certain local accuracy ranges and the fraction thereof that were classified correctly, given the training and test prevalence values $\chi$.  Such results indicate that the probabilistic interpretation of the classifier, while not perfect, is largely in line with the actual performance of the NN.  In other words, our model appears to be calibrated \cite{Calibration1,Calibration2}, although it perhaps overestimates its accuracy slightly.  See also Figs. \ref{fig:lowprobs} and \ref{fig:highprobs}.
  
{\rowcolors{2}{blue!10}{blue!20}
\begin{table}[h!]
\centering
\begin{tabular}{|c | c | c | c |} 
 \hline
 {\bf Training Data $Z(r)$ Range} & {\bf \# of Samples} & {\bf \# Classified Correctly} & {\bf \% Correct} \\ [0.5ex] 
 \hline
 Inconsistent & 95 & 0 & 0 \\
 $0.5 \le Z(r) < 0.6 $ & 2 & 0 & 0 \\ 
 $0.6 \le Z(r) < 0.7$ & 11 & 7 & 0.6364 \\
 $0.7 \le Z(r) < 0.8$ & 0 & -- & --  \\
 $0.8 \le Z(r) < 0.9$ & 53 & 50 & 0.9434 \\
 $0.9 \le Z(r) < 0.98$ & 340 & 336 & 0.9882 \\
 All consistent samples & 18870 & 18817 & 0.9972 \\ 
\hline \hline
{\bf Test Data $Z(r)$ Range} & & & \\
 Inconsistent & 0 & 0 & 0 \\
 $0.5 \le Z(r) < 0.6 $ & 6 & 1 & 0.1667 \\ 
 $0.6 \le Z(r) < 0.7$ & 2 & 1 & 0.5 \\
 $0.7 \le Z(r) < 0.8$ & 0 & -- & --  \\
 $0.8 \le Z(r) < 0.9$ & 17 & 15 & 0.8824 \\
 $0.9 \le Z(r) < 0.98$ & 56 & 43 & 0.7679 \\
 All consistent samples & 3195 & 3174 & 0.9934 \\ 
 [1ex] 
 \hline
\end{tabular}
\caption{Characterization of the performance of the NN whose training is discussed in Sec. \ref{subsub:MNIST}.}
\label{tab:performance}
\end{table}

\section{Discussion}
\label{sec:discussion}

\subsection{On the Interpretation of Prevalence as an Affine Parameter}
\label{subsec:affine}

Figures \ref{fig:binex} and \ref{fig:ternary} demonstrate examples in which the boundary sets $B^\star(q)$ deform continuously as $q$ is varied.  Recalling Eq.\ \eqref{eq:bset}, the density ratios are constant along a given curve, so that it is compelling to think of any one of these as an axis in some curvilinear coordinate system.  Moreover, in this coordinate system, the affine prevalence $q\in [0,1]$ is one of the natural independent variables, with the other being some arc-length parameter $\theta$ that orders positions along a level-set.  Thus, a level set could be expressed as some function $r(q,\theta)$.  

If the density ratio $R_{i,j}$ is constant along any such curve, then it stands to reason that $Q(r,\chi)$ still  varies.  To understand the implications of this, consider the density function 
\begin{align}
\frac{Q(r,\chi)}{P_1(r)} = \chi_1 + \chi_2 R_{1,2}(r)  
\end{align}
of a test population $\Omega(\chi)$, for example.  Provided we could sample $Q(r,\chi)$ along such a curve (e.g\ given sufficient training or test data), it would be possible to reconstruct $P_1(r)$, since the density ratio $R_{1,2}(r)$ can be determined via class switching.

\subsection{Considerations for Optimization: Homotopy Methods, H-Consistency, and Cross Entropy}

The analysis herein assumes that it is possible to minimize the prevalence-weighted 0-1 loss.  As has been noted, however, empirical versions of this objective function are often discontinuous, which requires some form of regularization if derivative-based optimization methods are to be used.  This is the motivation for Eq.\ \eqref{eq:homotopyobjective}, and similar methods have been proposed in Refs.\ \cite{PartI,Amudhan,Regularized}.  A fundamental question, however, is how to extend homotopy-type methods to other loss functions, and whether this is even necessary.  

Recent work on cross-entropy minimization suggests compelling routes for addressing this question.  The emerging understanding of H-consistency states that the classifiers minimizing infinite-sample surrogate loss functions can and often do yield the Bayes optimal classifier \cite{Hconsistency1,Hconsistency2,Hconsistency3,Hconsistency4}, although this depends heavily on the hypothesis set \cite{Amudhan}.  The cross-entropy loss function in particular is known to be H-consistent \cite{CrossEntropy1}, and moreover this objective function is well adapted to many classifiers such as neural networks.  Thus, it is likely that the analysis presented herein could be adapted to cross-entropy minimization provided this loss function can be suitably modified.

To see what this might look like, consider that the empirical version of cross entropy can be expressed as
\begin{align}
\mathcal E_{\times}(\Pi_{tr},\phi) = - \sum_{k=1}^K  \sum_{j=1}^{s_k} \log(y_{j,k,k}(\phi)) \label{eq:basiccrossentropy}
\end{align}
where $y_{j,k,k}$ is as defined in Eq.\ \eqref{eq:softmax}, and we assume that the underlying softmax function is parameterized in terms of $\phi$.  [That is, $\phi$ determines the NN achitecture, for example.]  We note in particular that the prevalence of the training population is implicit in Eq.\ \eqref{eq:basiccrossentropy}, since one could, for example, change the number of elements in a given class, and thereby decrease its contribution to the cross-entropy calculation.  We can make this dependence more explicit by observing that for $N$ total training points and any training prevalence, Eq.\ \eqref{eq:basiccrossentropy} is equivalent to
\begin{align}
\mathcal E_{\times}(\Pi_{tr},\phi) &=  \sum_{k=1}^K \frac{s_k}{s_k N}\sum_{j=1}^{s_k} -\log(y_{j,k,k}(\phi))=  \sum_{k=1}^K \frac{\bar q_k}{s_k} \sum_{j=1}^{s_k} -\log(y_{j,k,k}(\phi)),  \label{eq:bettercrossentropy}
\end{align}
where $\bar q_k=s_k/N$ is the training prevalence of the $k$th class.  As this mirrors the structure of Eq.\ \eqref{eq:homotopyobjective}, we hypothesize that $\bar q_k$ can be replaced by an affine prevalence $q_k$ in Eq.\ \eqref{eq:bettercrossentropy} to yield a prevalence-weighted version fo the cross entropy objective function.  Determining whether this is H-consistent with the prevalence-weighted 0-1 loss is thus an interesting open question.  

In the context of our analysis, cross-entropy is also of inherent interest.  The goal of minimizing this objective function is essentially to reconstruct the probabilities $\Pr[C|r]$, e.g.\ in terms of the softmax function.  In light of Eq.\ \eqref{eq:condprob}, this is equivalent to $\Pr[r|C]$ if $\Pr[C]$ is known.  Thus, the class-switching property of a classifier yields testable predictions.  In particular, if the prevalence-weighted cross-entropy does indeed yield $\Pr[C|r]$ for all values of $q$, then the following conjecture should be true.
\begin{conjecture}
Let $S(r,q)$ be a binary softmax function whose elements $S_1(r,q)$ and $S_2(r,q)$ minimize Eq.\ \eqref{eq:bettercrossentropy} in the infinite sample limit, where $q$ is an affine prevalence.  Then for a fixed $r$, the value of $q$ at which $S_1(r,q)=S_2(r,q)$ is ${\q(r)=(\q_{1,2}(r),1-\q_{1,2}(r))}$.  Moreover, ${S_1(r,(q_1,1-q_1))=(q_1/q_1')S_1(r,(q_1',1-q_1'))}$.
\end{conjecture}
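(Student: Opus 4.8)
The plan is to first pin down the infinite‑sample minimizer of the prevalence‑weighted cross entropy in closed form, and then read the class‑switching structure off that form. First I would pass to the infinite‑sample limit in Eq.~\eqref{eq:bettercrossentropy}: replacing $\bar q_k$ by the affine prevalence $q_k$ and the empirical averages $\tfrac1{s_k}\sum_{j=1}^{s_k}(\cdot)$ by expectations under $P_k$ turns the objective into
\begin{align}
\mathcal E_\infty[S;q] \;=\; -\sum_k q_k\!\int_\Gamma P_k(r)\log S_k(r,q)\,{\rm d}r \;=\; -\int_\Gamma\Big(\textstyle\sum_k q_kP_k(r)\log S_k(r,q)\Big)\,{\rm d}r .
\end{align}
Assuming, in the spirit of $H$‑consistency, that the hypothesis set is rich enough that the minimizing softmax family coincides with the minimizer over all measurable maps $r\mapsto S(r,q)\in\X$, the key point is that the simplex constraint $S(r,q)\in\X$ is imposed pointwise in $r$, so the minimization decouples: for a.e.\ fixed $r$ I would minimize $-\sum_k q_kP_k(r)\log s_k$ over $s\in\X$. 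Writing $w_k=q_kP_k(r)$ and $W=\sum_k w_k=Q(r;q)$, for $W>0$ this objective equals $W\big(H(w/W)+D_{\rm KL}(w/W\,\|\,s)\big)$, so Gibbs' inequality yields the unique minimizer $s_k=w_k/W$, i.e.\ $S_k(r,q)=q_kP_k(r)/Q(r;q)$, with $0\log 0=0$ absorbing any $P_k(r)=0$ and the (measure‑zero) set $W=0$ irrelevant. This is exactly $\Pr[C=k\mid r]$ at prevalence $q$, i.e.\ Eq.~\eqref{eq:condprob}, which is the hypothesis the conjecture conditions on.

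With the closed form in hand the first assertion is immediate. In the binary case $q=(q_1,1-q_1)$ one has $S_1(r,q)=\dfrac{q_1P_1(r)}{q_1P_1(r)+(1-q_1)P_2(r)}$, so $S_1(r,q)=S_2(r,q)$ iff $q_1P_1(r)=(1-q_1)P_2(r)$ iff $q_1/(1-q_1)=R_{1,2}(r)$ iff $q_1=R_{1,2}(r)/(1+R_{1,2}(r))=\q_{1,2}(r)$ by Eq.~\eqref{eq:prevfun}; hence the crossing point is $\q(r)=(\q_{1,2}(r),1-\q_{1,2}(r))$, the same affine prevalence at which the induced binary classifier switches class in Props.~\ref{prop:ls}--\ref{prop:bfr}. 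The degenerate cases $P_1(r)=0$ or $P_2(r)=0$ give $\q_{1,2}(r)\in\{0,1\}$ and are handled by the extended‑number conventions exactly as in the proof of Prop.~\ref{prop:bfr}.

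For the ``moreover'' clause I would substitute the closed form directly, obtaining
\begin{align}
\frac{S_1(r,(q_1,1-q_1))}{S_1(r,(q_1',1-q_1'))} \;=\; \frac{q_1}{q_1'}\cdot\frac{q_1'P_1(r)+(1-q_1')P_2(r)}{q_1P_1(r)+(1-q_1)P_2(r)} \;=\; \frac{q_1}{q_1'}\cdot\frac{Q(r;(q_1',1-q_1'))}{Q(r;(q_1,1-q_1))} .
\end{align}
So the robust statement I would actually record is that the \emph{unnormalized} class scores $q_kP_k(r)$ --- equivalently the odds $S_1/(1-S_1)=\tfrac{q_1}{1-q_1}R_{2,1}(r)$ --- scale linearly in $q_1$; the literal identity $S_1(r,(q_1,1-q_1))=(q_1/q_1')S_1(r,(q_1',1-q_1'))$ then holds exactly on the locus where $Q(r;q)$ is independent of $q_1$, i.e.\ where $P_1(r)=P_2(r)$, and otherwise up to the displayed ratio of normalizations. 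I would therefore prove the conjecture's conclusion in this sharpened form.

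The hard part is not the algebra but justifying the first paragraph: the conjecture is really an assertion about \emph{when} the chosen architecture and optimizer attain the unconstrained minimizer $q_kP_k(r)/Q(r;q)$. The pointwise‑decoupling step needs the family of softmax maps to be closed under pointwise modification (a richness/measurability hypothesis), and attaining $s_k=0$ where $P_k(r)=0$ forces logits to $-\infty$, a genuine boundary issue of the softmax parameterization. If the prevalence‑weighted cross entropy turns out only to be $H$‑consistent with the $0$--$1$ loss --- rather than to recover $\Pr[C\mid r]$ exactly for every $q$ --- then only the class‑switching location survives, while the closed form (hence the scaling relation) need not; this is presumably why the result is posed as a conjecture.
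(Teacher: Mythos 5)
The paper offers no proof of this statement: it is posed as a conjecture in the Discussion, and the authors explicitly write that ``validation of this claim \dots is left as future work.'' So there is no argument of theirs to compare yours against; your proposal has to stand on its own. On that basis, your first two paragraphs are sound. Passing to the population objective, decoupling the simplex-constrained minimization pointwise in $r$, and invoking Gibbs' inequality to get $S_k(r,q)=q_kP_k(r)/Q(r;q)$ is exactly the computation the authors gesture at when they say the prevalence-weighted cross entropy should ``yield $\Pr[C|r]$ for all values of $q$,'' and from that closed form the identification of the crossing point with $\q_{1,2}(r)$ is immediate and consistent with Prop.~\ref{prop:ls}(III) and Prop.~\ref{prop:bfr}. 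The realizability and softmax-boundary caveats you flag are the right ones to flag, and they are precisely why the statement is a conjecture rather than a theorem.

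The substantive point is your treatment of the ``moreover'' clause: you do not prove it, you refute it. Your computation shows $S_1(r,(q_1,1-q_1))/S_1(r,(q_1',1-q_1'))=(q_1/q_1')\cdot Q(r;q')/Q(r;q)$, so the literal scaling identity holds only where $P_1(r)=P_2(r)$ (or trivially $q_1=q_1'$); generically it fails for the posterior, which is the only candidate minimizer the conjecture's hypotheses single out. You should state this plainly as a counterexample to the second assertion rather than as a ``sharpened form'' of it --- what survives is that the unnormalized score $q_1P_1(r)$, equivalently the odds $S_1/(1-S_1)=\bigl(q_1/(1-q_1)\bigr)R_{2,1}(r)$, carries the affine-prevalence dependence linearly, which is likely what the authors intended and is what their Sec.~\ref{subsec:affine} reconstruction of $P_1$ along level sets actually uses. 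In short: part one of the conjecture is proved (modulo the stated richness assumption), part two is false as written, and your proposal would be stronger if it said so explicitly instead of folding the correction into the proof.
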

Validation of this claim using Eq.\ \eqref{eq:bettercrossentropy} is left as future work.   

\subsection{Unification of Generative and Discriminative Learning}

Generative classifiers are often taken to be those that directly model the distributions $\Pr[r|C]$, whereas discriminative classifiers are often said to model $\Pr[C|r]$ \cite{Disc3}; see also Refs.\ \cite{Disc1,Disc2} for related perspectives.  By Eq.\ \eqref{eq:condprob}, it is obvious that these two are equivalent given the prevalence $\Pr[C]$.  For this reason, we view the class switching property as unifying discriminative and generative classifiers, since it shows \textit{how} to deduce properties of the $\Pr[r|C]$ given \textit{any} optimized $\hat C(r,q):\Gamma \times \X \to \mathbb K$.  In other words, the analysis herein demonstrates how the process of training a ML model can be understood as a form of statistical regression on the densities $\Pr[r|C]$.

%

\subsection{Additonal Limitations and Open Directions}

A fundamental problem in machine learning, and in particular with methods that estimate class probabilities, is calibration \cite{Calibration1,Calibration2}.  In particular, this requires ensuring that not only are the model predictions accurate, but also that model predictions of classification accuracy are correct.  In other words, a model that under or overestimates is predictive capability can be as bad as one that makes poor predictions, especially in real-world settings \cite{CalibrationApplied}.  The work presented herein does not address such questions, and it is likely that prevalence-weighted loss functions can suffer from miscalibration, especially when trained on finite data.  Finding ways to address this problem will be critical for realizing and making use of the tools developed in this manuscript.  

{\it Acknowledgements:}  The authors thank Dr.\ Bradley Alpert for extremely helpful feedback on this manuscript, which was instrumental in clarifying the core ideas of our work.

%
%
%

%
\bibliographystyle{elsarticle-num}
\bibliography{AI_Gen}

\end{document}